\documentclass{article} 
                        
\usepackage[preprint]{neurips_2026}

\usepackage[american]{babel}

\usepackage{natbib} 

\usepackage{mathtools} 
\usepackage{booktabs} 
\usepackage{tikz} 

\usepackage[utf8]{inputenc}
\usepackage{amsmath,amsthm,amssymb,graphicx,mathtools,tikz,hyperref,bbm,listings,caption,subcaption}
\usepackage{notation}
\usepackage{researchpack}
\usepackage{algorithm}
\usepackage{algorithmic}
\usepackage[capitalise,nameinlink]{cleveref}
\usepackage{notation}
\usepackage{researchpack}
\usepackage{natbib}
\usetikzlibrary{positioning}
\allowdisplaybreaks[4]

\crefname{defn}{Def.}{Def.}
\crefname{section}{Sec.}{Sec.}
\crefname{algorithm}{Alg.}{Alg.} 
\crefname{thm}{Thm.}{Thm.}
\crefname{lem}{Lem.}{Lem.}
\crefname{prop}{Prop.}{Prop.}
\crefname{asm}{Asm.}{Asm.}
\crefname{appendix}{Appx.}{Appx.}

\newtheorem{theorem}{Theorem}[section]
\newtheorem{corollary}{Corollary}[theorem]
\newtheorem{lemma}[theorem]{Lemma}

\title{The Expressivity Boundary of Probabilistic Circuits: \\ A Comparison with Large Language Models}
\author{\textbf{Zhiyu Zhao}$^{1\ast}$, \textbf{Xuejie Liu}$^{2,3\ast}$, \textbf{Muhan Zhang}$^{2}$, \textbf{Anji Liu}$^{1}$\\
$^1$School of Computing, National University of Singapore \\ $^2$Institute for Artificial Intelligence, Peking University \\
$^3$School of Intelligence Science and Technology, Peking University \\
\tt zhiyu.zhao@nus.edu.sg, \tt xjliu@stu.pku.edu.cn, \\\tt muhan@pku.edu.cn, \tt anjiliu@comp.nus.edu.sg}

\begin{document}

\maketitle
\footnotetext[1]{Equal contribution.}

\begin{abstract}
Probabilistic Circuits (PCs) are deep generative models that support exact and efficient probabilistic inference. Yet in autoregressive language modeling, PCs still lag behind Transformer-based large language models (LLMs), suggesting an important expressivity gap.
In this work, we compare PCs and LLMs under a unified autoregressive formulation. First, an output bottleneck: PCs parameterize predictions as convex combinations in probability space, which struggles to represent the sharp distributions typical of language; adopting a logit-space parameterization substantially narrows this gap. Second, a context-encoding bottleneck: we prove that structured-decomposable PCs can match Transformer separation rank on vtree-aligned partitions, but show, both theoretically and empirically, that this capacity is limited to partitions aligned with the fixed routing structure, leading to severe degradation when the data exhibits heterogeneous dependency topologies. We further prove that decomposable PCs are strictly more expressive than structured-decomposable ones, though effectively optimizing them remains an open challenge.
\end{abstract}

\section{Introduction}
Generative models have achieved remarkable success in learning high-dimensional distributions and producing realistic samples across data modalities~\citep{kingma2013auto,goodfellow2014generative,ho2020denoising}.
Despite this progress, most state-of-the-art neural generative models are optimized primarily for sampling and scoring a narrow family of queries (e.g., next-token probabilities in autoregressive LLMs~\citep{radford2019language,brown2020language,touvron2023llama}), while many downstream settings require richer probabilistic reasoning such as exact marginalization, conditioning, and structured inference.

Probabilistic Circuits (PCs) \citep{choi2020probabilistic} are a class of deep generative models that occupy a distinct point in this landscape: by design, they support \emph{exact} and \emph{efficient} probabilistic queries over their learned distributions, including arbitrary marginal and conditional probability computations. This tractability offers a sharp contrast to dominant neural paradigms, where inference is often approximate or restricted to model-native queries~\citep{poon2011sum,vergari2021compositional,kisa2014probabilistic}.
As a result, PCs have enabled strong results in inference-demanding applications such as controlled generation~\citep{weng2025trace,zhang2023tractable}, lossless (or near-lossless) compression~\citep{liu2021lossless}, and reinforcement learning~\citep{liu2024tractable}, where repeated conditional queries and marginalizations are central to the task.
However, on standard generative modeling benchmarks, PCs still lag behind modern neural generators, suggesting an important open question: \emph{what are the root causes of the expressivity gap between tractable PCs and neural sequence models?}

Prior work has developed theoretical characterizations of PC expressiveness and succinctness, typically analyzing PCs in isolation or comparing subclasses within the tractable modeling family \citep{de2021compilation,martens2014expressive}.
In contrast, this paper takes a different route: we directly compare PCs against the dominant neural paradigm in sequence modeling, namely Transformer-based Large Language Models (LLMs).
To enable a head-to-head analysis, we cast both model classes under a unified autoregressive view as \emph{context-conditioned encoders}:
given a prefix $x_{<t}$, both models instantiate a mapping $\theta_t = \mathrm{Encoder}(x_{<t})$ that produces a predictive distribution over the next token $X_t$.
This unified lens allows us to investigate the expressivity gap through two structural bottlenecks: (i) the \emph{output bottleneck}, \ie how the internal representation is translated into a distribution over the vocabulary, and (ii) the \emph{context encoding bottleneck}, \ie how flexibly the model routes and fuses information from the prefix.

Our analysis reveals that the gap is not explained by a single factor, but instead decomposes into two mismatches between standard PCs and Transformers.

\textbf{1) The Output Mismatch.}
Transformers generate logits and apply a softmax, naturally accommodating the sparse, peaked next-token distributions of language. Standard PCs operate directly in probability space, and we find that adopting a logit-style output parameterization can substantially narrow the gap.



\textbf{2) The Routing Mismatch.}
Self-attention enables Transformers to route information dynamically based on the input. In contrast, structured PCs use a fixed routing pattern; we find that this rigidity can significantly hurt performance on data with heterogeneous dependency structures, yet when the data's dependency topology aligns with the circuit's structure, the gap can be markedly reduced, a finding that is not obvious a priori. We further explore relaxing structured decomposability to mitigate the fixed routing limitation of PCs.


Taken together, we characterize an expressivity gap between PCs and LLMs through a combination of theoretical and empirical analysis. Specifically, we attribute this gap to two structural bottlenecks, namely the output bottleneck and the context-encoding bottleneck. These findings also yield concrete implications for future work on improving PC expressiveness.

\section{Preliminaries}


In this section, we review the generative modeling frameworks of Transformer-based Large Language Models (LLMs) and Probabilistic Circuits (PCs). While both model classes aim to represent a joint distribution over a sequence of $T$ discrete (categorical) variables $\X = \{X_1, \dots, X_T\}$, they differ fundamentally in their computational structures and probabilistic semantics. We therefore first introduce the basic syntax and semantics of each model class, and then present a unified perspective in Section~\ref{sec:unified-decomposition} to support a direct comparison of their expressive power.

\paragraph{Large Language Models}

LLMs are typically defined as autoregressive models that decompose the joint distribution $p(\x)$ into a product of conditional distributions:
\begin{equation}
p(\x) = \prod_{t=1}^T p(x_t \given \x_{<t}), \label{eq:autoregressive_factorization}
\end{equation}
where $\x_{<t} = \{x_1, \dots, x_{t-1}\}$ denotes the prefix context. The core of a modern LLM is the Transformer backbone $f_{\mathrm{LLM}}$, which serves as a powerful context encoder. Specifically, for each token $\x_t$, the model computes a vector of parameters $\params_t$ (e.g., logits) that defines the categorical distribution over the vocabulary:
\begin{equation}
\params_t = f_{\mathrm{LLM}}(\x_{<t}), \quad p(x_t \given \x_{<t}) = \mathrm{Softmax}(\params_t).
\end{equation}
From a probabilistic perspective, the Transformer's expressiveness stems from its ability to learn complex dependencies in $f_{\mathrm{LLM}}$, allowing the conditional distribution $p(x_t \given \x_{<t})$ to depend on the entire history in a highly flexible manner.

\paragraph{Probabilistic Circuits}

Probabilistic Circuits \citep{choi2020probabilistic} represent a joint distribution through a rooted Directed Acyclic Graph (DAG). Each node $n$ in a PC explicitly represents a probability distribution $p_n$ over a subset of variables $\X_n \subseteq \X$, known as its scope. A PC defines this distribution by starting from input nodes and recursively combining them through inner nodes. Specifically, input nodes are the leaves of the DAG and define primitive distributions over their scope $\mathbf{X}_n$. Inner nodes represent distributions formed by their children and are of two types: product nodes, which represent the factorized distribution of their children, and sum nodes, which represent a weighted mixture of their children's distributions. Formally, the distribution $p_n(\x)$ at node $n$ is defined recursively as:
\begin{equation}
p_n(\x) \!:=\! 
\begin{cases}
g_n(\x) 
& \!\!\!\!  \text{$n$ is an input node,} \\[4pt]
\prod_{c \in \mathrm{ch}(n)} p_c(\x) 
& \!\!\!\!  \text{$n$ is a product node,} \\[6pt]
\sum_{c \in \mathrm{ch}(n)} \omega_{n,c}\, p_c(\x) 
& \!\!\!\! \text{$n$ is a sum node,}
\end{cases}
\label{eq:pc-recursive}
\end{equation}

where $\mathrm{ch}(n)$ denotes the children of node $n$, $g_n(\x)$ is a
univariate distribution (e.g., Categorical) over a variable
$X\in\X$, and $\omega_{n,c} \ge 0$ are learnable weights such that $\sum_{c \in \mathrm{ch}(n)} \omega_{n,c} = 1$.  Compared to LLMs, a defining advantage of PCs is that they support efficient exact and efficient inference for a large class of probabilistic queries, including arbitrary marginalization and conditional probability queries \citep{vergari2021compositional}. This tractability is enabled by structural constraints imposed on the circuit, most notably \emph{decomposability}, where the children of every product node have disjoint scopes, effectively simulating fully factorized distributions.

\section{A Unified Autoregressive Perspective}
\label{sec:unified-decomposition}

As established in the previous section, LLMs and PCs represent two distinct modeling frameworks: while LLMs rely on neural function approximators to learn autoregressive factors, PCs directly model the joint distribution by hierarchically composing simple distributions into complex ones. To facilitate a direct comparison, we observe that the tractable nature of PCs allows them to be equivalently cast in an autoregressive form, since PCs can compute arbitrary conditional probabilities efficiently, including the exact sequence of autoregressive factors defined in Equation~\ref{eq:autoregressive_factorization}. 

As a result, the PC’s inference procedure for computing these autoregressive factors induces a similar context-conditioned mapping as LLMs, \ie $\boldsymbol{\theta}_t = f_{\mathrm{PC}}(\x_{<t})$, which is analogous to $f_{\mathrm{LLM}}$ and also transforms a prefix into the predictive parameters of the distribution over $X_t$. In this sense, while a PC represents a joint distribution, it can be operationalized as an autoregressive computation graph. This allows us to directly compare the expressiveness of the Transformer’s forward pass $f$ with the PC’s conditional inference $h$, treating both as alternative mechanisms for generating next-token distributions.

With the unified perspective of $\boldsymbol{\theta}_t = \text{Encoder}(\x_{<t})$ established, where $f_{\mathrm{LLM}}$ and $f_{\mathrm{PC}}$ both serve as context-conditioned encoders, we can now investigate the sources of the expressiveness gap between both models.

In Transformer-based LLMs, the predictive parameters are typically produced by a linear output projection:
\begin{equation}
    \boldsymbol{\theta}_t^{\mathrm{LLM}}
    =
    \W^{\mathrm{LLM}}
    \mathbf{e}^{\mathrm{LLM}}(\x_{<t}),
    \qquad
    p^{\mathrm{LLM}}(X_t \mid \x_{<t})
    =
    \mathrm{Softmax}
    \big(
    \boldsymbol{\theta}_t^{\mathrm{LLM}}
    \big),
\label{eq:llm-output-layer}
\end{equation}
where $\mathbf{e}^{\mathrm{LLM}}(\x_{<t}) \in \mathbb{R}^{d}$ is the final hidden representation of the context $\x_{<t}$, and $\W^{\mathrm{LLM}} \in \mathbb{R}^{|\mathcal{V}| \times d}$. Since the output space is high-dimensional, with size $|\mathcal{V}|$, while the hidden dimension $d$ is typically much smaller, the attainable logit vectors are constrained by a low-dimensional projection. This dimensionality mismatch characterizes the output bottleneck in Transformer-based language models.

PCs face an analogous bottleneck during conditional inference. Unlike Transformer hidden units, which carry unconstrained continuous activations, PC nodes have explicit probabilistic semantics: their values correspond to probabilities. Furthermore, since sum nodes represent mixture distributions, each sum node can be associated with a categorical latent variable selecting among its children \citep{peharz2016latent}. When evaluating the conditional distribution of $X_t$, we identify a categorical latent variable $Z$ whose states correspond to the parent nodes of the input leaves for $X_t$. The resulting conditional is a mixture over latent states:
\begin{equation}
p^{\mathrm{PC}}(x_t \mid \x_{<t})
=
\sum_{z \in \mathrm{val}[Z]}
p^{\mathrm{PC}}(z \mid \x_{<t})\,
p^{\mathrm{PC}}(x_t \mid z).
\label{eq:pc-output-factor}
\end{equation}

This mixture can be written in matrix form. Let
$\mathbf{e}^{\mathrm{PC}}(\x_{<t}) \in \Delta^k$ denote the posterior vector over the $k=|\mathrm{val}[Z]|$ latent states, with entries
$\mathbf{e}^{\mathrm{PC}}_j(\x_{<t}) = p^{\mathrm{PC}}(z_j \mid \x_{<t})$.
Let $\W^{\mathrm{PC}} \in \mathbb{R}_{+}^{|\mathcal{V}| \times k}$ be the emission matrix, where
$[\W^{\mathrm{PC}}]_{i,j} = p^{\mathrm{PC}}(X_t=v_i \mid z_j)$.
Each column of $\W^{\mathrm{PC}}$ is a categorical distribution over the vocabulary. Therefore,
\begin{equation}
    \boldsymbol{\theta}_t^{\mathrm{PC}}
    :=
    p^{\mathrm{PC}}(X_t \mid \x_{<t})
    =
    \W^{\mathrm{PC}}
    \mathbf{e}^{\mathrm{PC}}(\x_{<t})
    \in \Delta^{|\mathcal{V}|}.
\label{eq:pc-output-matrix}
\end{equation}

This vectorized form makes the analogy with the Transformer output layer in Equation~\ref{eq:llm-output-layer} explicit. In both cases, a low-dimensional context representation is mapped to a vocabulary-sized predictive object. However, the two mappings operate in different spaces. In Transformers, $\boldsymbol{\theta}_t^{\mathrm{LLM}}$ is an unconstrained logit vector, which is subsequently normalized by a softmax. In PCs, $\boldsymbol{\theta}_t^{\mathrm{PC}}$ is already the probability vector itself: both the mixture coefficients $\mathbf{e}^{\mathrm{PC}}(\x_{<t})$ and the columns of $\W^{\mathrm{PC}}$ lie in probability simplices. Consequently, the PC prediction is constrained to lie in the convex hull of $k$ basis distributions. Thus, when $k \ll |\mathcal{V}|$, PCs exhibit an output bottleneck analogous to that of LLMs in dimensionality, but geometrically more restrictive because the bottleneck operates in probability space rather than logit space.

\paragraph{Context Encoding Bottleneck}
The second source of the gap is the capacity of the encoders $f$ and $h$ to capture dependencies within the prefix $\x_{<t}$. The self-attention mechanism of Transformers ($f_{\mathrm{LLM}}$) provides a highly flexible functional encoding. It allows every variable in the context to interact non-linearly, enabling the model to learn complex, long-range dependencies without structural overhead. By contrast, to support tractable and exact inference, PCs must adhere to structural constraints (e.g., decomposability). These constraints act as a rigid structural inductive bias, limiting the ``flow'' of information and the complexity of interactions that can be encoded in $f_{\mathrm{PC}}$. We will investigate how these structural biases fundamentally limit the PC's ability to represent the dense dependency structures that Transformers capture with ease.

\section{The Output Bottleneck}

As introduced in Section~\ref{sec:unified-decomposition}, PCs and LLMs share a common structural output bottleneck: a $d$-dimensional latent representation is projected into a $|V|$-dimensional space to parameterize the predictive distribution $\theta$. However, the key distinction lies in the geometric domain of this mapping. Transformer-based LLMs perform the projection in logit space, whereas PCs operate directly in probability space via convex combinations of basis distributions.

This difference suggests a potential representational disparity. Natural language next-token distributions are often low-entropy and can exhibit sharply peaked behavior given the context. In such cases, representing distributions as convex combinations in probability space may impose stronger geometric constraints than unconstrained linear mappings in logit space. To systematically investigate the impact of this ``output bottleneck'', we conduct controlled experiments across two model families: Hidden Markov Models (HMMs)~\citep{choi2020probabilistic}, as a representative class of PCs, and Transformers.

\noindent \textbf{Controlled Model Variants.}
An HMM can be viewed as a specific PC architecture where the latent variable $Z$ denotes the state, and the output mapping $W_{\text{PC}}$ corresponds to the emission matrix. In this setup, the embedding size $d$ is equivalent to the number of latent states. To isolate the effect of the output bottleneck, we introduce a Logit-HMM variant. Instead of the standard convex combination, the Logit-HMM extracts a feature vector $\mathbf{e}_t = \log p(z_t \mid \mathbf{x}_{\le t})$, representing the log-posterior over latent states. This feature is then passed through a linear layer $\mathbf{W} \in \mathbb{R}^{|V| \times d}$ to generate logits, which are normalized via $\mathrm{Softmax}$ to produce the final distribution.

To provide a parallel intervention within the Transformer family, we introduce a Prob-Transformer variant that replaces the standard logit-space output layer with a probability-space parameterization. Specifically, instead of producing logits followed by a Softmax, the model predicts a mixture over a set of basis distributions, where the mixture coefficients are directly derived from the hidden representation. This enforces a convex combination structure analogous to PCs, while preserving the same context encoder (see Appendix~\ref{app:prob-transformer}). We fix the Transformer architecture to 2 layers across all experiments.

We now evaluate these four model variants across datasets with different output scales to test whether the probability-space parameterization imposes a stronger output bottleneck than logit-space prediction. See Appendices~\ref{app:model-size} and~\ref{app:training} for parameter counts, architectural and training hyperparameters.

\begin{figure}[t]
    \centering
    \includegraphics[width=\linewidth]{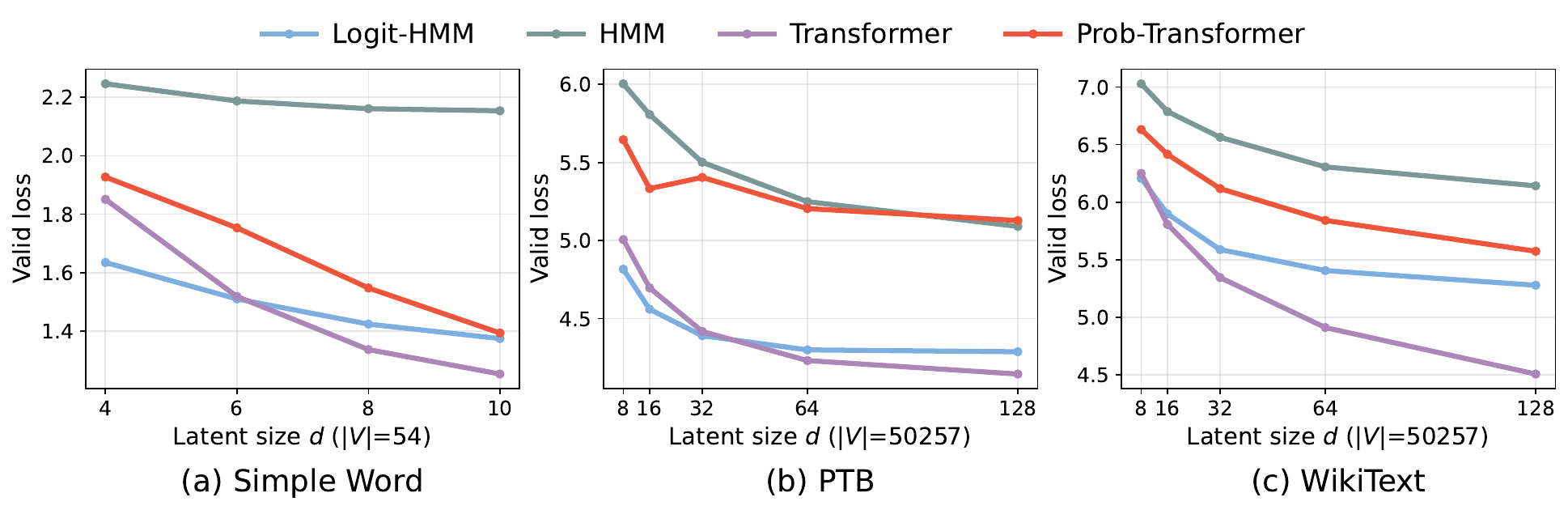}
    \caption{\textbf{Impact of output parameterization on modeling performance.} Lower validation loss is better. Across all datasets, logit-space variants improve over their probability-space counterparts within the same model family: Logit-HMM improves over HMM, and Transformer improves over Prob-Transformer. This supports the hypothesis that probability-space convex combinations impose a stronger geometric constraint than logit-space projections under a limited latent dimension.}
    \label{fig:output_bottleneck}
\end{figure}

\noindent \textbf{Controlled Sparse Prediction.}
To directly validate our hypothesis, we first use a simple word-level dataset of English words and roots \citep{lugosch2020pytorch} with a character-level tokenizer, where the task is to predict the final character from the prefix. This setting induces sparse, sharply peaked next-token distributions, since a prefix often nearly determines the final token. Because the vocabulary is small ($|\mathcal{V}|=54$), we sweep smaller latent sizes $d \in \{4,6,8,10\}$ so that $d \ll |\mathcal{V}|$ and the output bottleneck remains nontrivial. As shown in Figure~\ref{fig:output_bottleneck}~(a), the logit-space variants consistently outperform their probability-space counterparts under the same latent capacity. This indicates that even in a small-vocabulary setting, probability-space convex combinations are less efficient at representing sharp predictive distributions than logit-space projections.

\noindent \textbf{Real-World Language Modeling.}
To further validate our hypothesis on more realistic data, we extend the evaluation to two standard language modeling benchmarks: the Penn Treebank (PTB)~\citep{marcus1993building}, using $\sim$20K training sequences of length $64$, and a larger-scale WikiText-103~\citep{merity2016pointer} subset with 200K training sequences of length $128$. All models are trained using the standard autoregressive language modeling objective. We adopt the GPT-2 tokenizer~\citep{radford2019language} in both settings, yielding a vocabulary size of $|\mathcal{V}|=50{,}257$, and sweep $d \in \{8,16,32,64,128\}$.

In these settings, the output bottleneck becomes more pronounced since low-dimensional latent representations must be projected into a much larger vocabulary space. The results in Figure~\ref{fig:output_bottleneck}~(b) and (c) again show a consistent within-family pattern: logit-space variants outperform their probability-space counterparts across both datasets. On PTB, the Logit-HMM achieves performance comparable to the Transformer, suggesting that improving the output parameterization alone can substantially narrow the gap in this setting. On the larger WikiText subset, while the Logit-HMM remains substantially stronger than the standard HMM and also outperforms the Prob-Transformer, a gap with the Transformer persists. This indicates that once the probability-space output bottleneck is alleviated, additional factors such as context encoding become increasingly important at larger scale.

\section{The Context Encoding Bottleneck}
\label{sec:meta-token}

We now turn to the second source of the expressiveness gap: each encoder's capacity to fuse information across context variables. A model that fails to capture rich dependencies within $\x_{<t}$will produce poor conditionals. We therefore ask: how richly can each architecture combine information from distinct subsets of $\x_{<t}$?

In a Transformer, information fusion is carried out by self-attention. Attention weights are computed as a function of the input, allowing the model to dynamically rewire which positions interact and how strongly. This input-dependence lets the same architecture instantiate distinct effective interaction graphs depending on the context, capturing complex, long-range dependencies without prior structural commitment.
Unlike Transformers, PCs commit to a fixed interaction structure at design time. Cross-variable fusion occurs only at product nodes, whose children have pairwise disjoint scopes; sum nodes operate over children with identical scopes and thus do not directly combine information across different variables. Dependencies are instead captured indirectly: sum nodes assign different mixture weights to different factorization patterns, so variable interactions are modeled collectively through the global mixture structure.
To make this fixed structure precise, we focus on structured-decomposable PCs \citep{kisa2014probabilistic, dang2020strudel, loconte2024relationship}, where all scope splits are governed by a fixed binary tree called a \emph{vtree}.

\begin{defn}[Vtree]
\citep{pipatsrisawat2008new, kisa2014probabilistic}
A \emph{vtree} $\mathcal{T}$ over variables $\X$ is a full binary tree whose leaves are in bijection with $\X$. Each internal node $v \in \mathcal{T}$ is associated with a scope $S_v \subseteq \X$, defined recursively as $S_v = S_v^\ell \cup S_v^r$, where $S_v^\ell$ and $S_v^r$ are the scopes of its left and right children respectively, with $S_v^\ell \cap S_v^r = \emptyset$. The root node has scope $\X$.
\end{defn}

A PC is \emph{structured-decomposable} with respect to $\mathcal{T}$ if, at every product node $n$ with scope $S_n$, the scopes of its children correspond exactly to $(S_v^\ell, S_v^r)$ for the unique vtree node $v$ such that $S_v = S_n$.
This means that the vtree fully enumerates the set of all possible variable interactions, \emph{making the expressiveness constraints imposed by the fixed structure explicit and analyzable.}


To quantify and compare the richness of variable fusing in PCs and LLMs, we need a formal measure of \emph{how much a function can couple information across a partition of its inputs}. We adopt \emph{separation rank} as our primary complexity measure.

\begin{defn}[Separation Rank]
Let $\X$ be a set of discrete variables, and let $(A,B)$ be a partition of $\X$ (so $A\cup B=\X$ and $A\cap B=\emptyset$).
For a scalar function $f(\x_A,\x_B)$, the separation rank of $f$ with respect to $(A,B)$ is defined as
\begin{equation*}
\begin{aligned}
\operatorname{sep}&(f;A,B)
:=\min\Big\{R\in\mathbb{N}_{+}:\ \exists \{g_r\}_{r=1}^R,\{h_r\}_{r=1}^R, \text{such that } f(\x_A,\x_B)=\sum_{r=1}^R g_r(\x_A)\,h_r(\x_B)\Big\}.
\end{aligned}
\end{equation*}
\end{defn}

Intuitively, $\mathrm{sep}(f; A, B)$ is the minimum number of rank-one cross terms needed to express $f$ across the cut $(A, B)$. A separation rank of $1$ corresponds to a fully factored function $f(\x_A, \x_B) = g(\x_A) h(\x_B)$, meaning the two sides of the partition contribute independently. A high separation rank, on the other hand, indicates that the function encodes many linearly independent interaction patterns between $A$ and $B$, and hence cannot be decomposed without a large number of terms.




\paragraph{Existence of High-Rank Partitions.} Prior work on Transformer expressiveness has established that a Transformer of embedding dimension $d$ and depth $L<\log_3 d$ achieves the following separation rank \citep{yang2024etas}:
$
    \log \operatorname{sep}(f_{\text{LLM}}; A, B) = \widetilde{O}(d \log d)
$.
A naive expectation is that the fixed routing structure of structured-decomposable PCs fundamentally limits their separation rank, resulting in much weaker variable interactions compared to Transformers across all partitions.

Surprisingly, we show that for any structured-decomposable PC, there always exists a partition of the variables along which the circuit achieves separation rank comparable to that of a Transformer.

\begin{thm}
\label{thm:pc-high-rank}
For any structured-decomposable PC with vtree $\mathcal{T}$, there exists a partition 
$(A^\star, B^\star)$ of the variables that is aligned with $\mathcal{T}$ such that 
$
    \log \operatorname{sep}(f_{\mathrm{PC}}; A^\star, B^\star)    = \widetilde{O}(T \log d)
$,
where $T$ is the number of variables and $d$ is the maximum interface dimension of the circuit, which informally measures the expressive capacity of individual nodes. Formal definitions are provided in Appendix~\ref{app:proofs}.
\end{thm}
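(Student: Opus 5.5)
Our plan is to reduce the separation rank across a vtree-aligned cut to the number of nodes at the corresponding vtree node, and then choose the cut so that this count is as large as possible. The central fact is the standard factorization of structured-decomposable circuits. Fix an internal vtree node $v$ and take $A^\star = S_v$, $B^\star = \X\setminus S_v$. Every PC node whose scope is $S_v$ computes a function of $\x_{A^\star}$ alone. Write these nodes as $n_1,\dots,n_{k_v}$, and let $k_v \le d$ be the interface dimension at $v$.

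\textbf{Step 1 (upper bound).} We show by induction from the root downward that
\[
f_{\mathrm{PC}}(\x) = \sum_{j=1}^{k_v} p_{n_j}(\x_{A^\star})\, h_j(\x_{B^\star}),
\]
where each $h_j$ is a polynomial in the nodes whose scopes are disjoint from $S_v$. This follows because every product node above $v$ splits its scope along the vtree, and $S_v$ lies entirely inside one child. Unrolling the sums and products therefore yields expressions that are linear in the $\{p_{n_j}\}$. Consequently $\operatorname{sep}(f_{\mathrm{PC}};A^\star,B^\star)\le k_v\le d$. The same argument gives a bound for the autoregressive conditional $f_{\mathrm{PC}}(\x_{<t})$, since that conditional is a ratio of two marginals, and each marginal inherits the factorization after the leaves are summed out.

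\textbf{Step 2 (matching the stated rate).} The quantity to compare with the Transformer is the exponent $\widetilde{O}(T\log d)$. We read the statement as saying that the bound is attainable and of that order. We therefore choose a cut that aggregates many rank contributions. A naive cut at the root's children only gives rank $\le d$. Instead, we refine the count by expanding the $h_j$ and $p_{n_j}$ recursively along the vtree path: each of the $O(T)$ internal vtree nodes contributes a factor of at most $d$. This gives an upper bound of $d^{O(T)}$, i.e.\ $\log\operatorname{sep}=O(T\log d)$, for every aligned cut. This is the appropriate quantity when the interface dimension is defined per vtree node and the cut is measured with respect to the full tensor-network contraction.

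\textbf{Step 3 (existence of a high-rank cut).} To make the bound tight in the $\widetilde{O}$ sense, we pick the vtree node $v^\star$ that splits $\X$ in the most balanced way; a centroid edge of a binary tree gives parts of size at least about $T/3$. We then exhibit parameters under which the interface matrices at the levels below $v^\star$ are full rank. The resulting matricization behaves like a balanced tree tensor network with bond dimension $d$, whose rank across a balanced cut reaches $d^{\Omega(\text{number of crossing bonds})}$.

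\textbf{Main obstacle.} We expect Step 3 to be the hardest part. It requires showing that generic nonnegative parameters achieve this rank, for instance via a Zariski-density argument in the spirit of the Cohen--Shashua depth-efficiency results. It also requires carefully fixing the formal definition of ``interface dimension'' in Appendix~\ref{app:proofs}, so that the count produces exactly the exponent $T\log d$ rather than merely $\log d$.
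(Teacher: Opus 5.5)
Your Step 1 is correct, and it coincides with what the paper actually proves. The appendix establishes only an upper bound. It introduces the quantity $R_A(v)$ and shows that $A$-pure frontier nodes contribute $d_v$, $B$-pure ones contribute $1$, and mixed nodes multiply. From this it concludes $\operatorname{sep}(y;A,B)\le\min\{\prod_{v\in\mathcal{C}_A}d_v,\prod_{u\in\mathcal{C}_B}d_u\}$, hence $\log\operatorname{sep}\le|\mathcal{C}_A|\log d\le T\log d$. On a vtree-induced cut $(U_v,\X\setminus U_v)$ it further shows $\operatorname{sep}=r_v\le d_v$. Your Step 1 is exactly this special case, with $\mathcal{C}_{A^\star}=\{v\}$. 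On its own it already gives $\log\operatorname{sep}\le\log d=\widetilde{O}(T\log d)$, which is all the paper's argument delivers.

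The genuine gap is Step 3, together with the ``attainment'' reading that motivates Steps 2--3. As posed, Step 3 cannot succeed, and your own Step 1 refutes it: every cut $(S_{v^\star},\X\setminus S_{v^\star})$ has separation rank at most $k_{v^\star}\le d$, for all parameter values and however balanced $v^\star$ is. In tensor-network terms, exactly one bond crosses such a cut, namely the edge from $v^\star$ to its parent, so ``$d^{\Omega(\#\text{crossing bonds})}$'' is just $d$. The paper's only attainment statement (full-rank transfer) likewise gives exactly $d_v$ on these cuts.

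Balance is the wrong parameter; the exponent is governed by $|\mathcal{C}_A|$, the number of maximal pure subtrees. A rank of $d^{\Omega(T)}$ is conceivable only for cuts that fragment the vtree. An example is sending one leaf of every sibling pair to $A$ and the other to $B$, so that $|\mathcal{C}_A|=\Omega(T)$. Such cuts are precisely not aligned in your sense. A matching lower bound there would need a genuine rank lower-bound argument, such as exhibiting one witness parameter setting and then invoking genericity; neither your proposal nor the paper provides one.

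Step 2 shares the same confusion. On an aligned cut, ``each internal vtree node contributes a factor $d$'' is a vacuous weakening of Step 1. For a general cut, the factors come only from the $A$-pure frontier nodes, not from every node along a path.

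Finally, your remark that the bound transfers to the autoregressive conditional because it is a ratio of two marginals is not valid. Entrywise division does not preserve low matricization rank. For example, $[1+a_ib_j]_{ij}$ has rank $2$, yet $[1/(1+a_ib_j)]_{ij}$ is a column-scaled Cauchy matrix and hence full rank for pairwise distinct positive $a_i$ and $b_j$. The paper avoids this issue by bounding the circuit output $y$ directly.
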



Since the interface dimension $d$ in a PC plays an analogous role to the embedding dimension in a Transformer, and $T$ is typically on the same order as $d$ in the LLM setting, the two bounds are asymptotically comparable.
Therefore, the two bounds are of the same asymptotic order. 

\section{The Cost of Fixed Routing}

Theorem~\ref{thm:pc-high-rank} shows that PCs can match Transformers in maximum separation rank. However, this capacity is realized only when the vtree aligns with the data's dependency topology. 

We now demonstrate that this alignment requirement is a fundamental limitation: PCs suffer severe performance degradation under structural mismatch (Sec.~\ref{sec:structural-selectivity}), and real-world data, where different samples exhibit different dependency topologies, inherently demands the sample-dependent routing that only Transformers provide (Sec.~\ref{subsec:transformer-mixed-topology}).




\subsection{Structural Selectivity in Probabilistic Circuits}
\label{sec:structural-selectivity}


We construct a controlled setting where the dependency topology of the target distribution can be explicitly manipulated, which allows us to directly test PC performance under matched and mismatched vtree layouts against a Transformer baseline.

\paragraph{Setup.}
We use two synthetic autoregressive datasets, each emphasizing a different dependency topology (details in Appendix~\ref{app:synthetic_data}). For PCs, we use a structured-decomposable model with a balanced binary vtree, fixing the architecture and varying only the \emph{leaf order}. We compare a \emph{standard} layout (original token order, aligned with local dependencies) and a \emph{shifted-induction} layout (pairing each prefix position with its copy target, aligned with induction-style dependencies). Each layout is matched to one dataset and mismatched to the other. We compare against a standard causal Transformer trained on the same next-token objective; all models are evaluated by token-level NLL on held-out data. Full experimental details are in Appendix~\ref{appx:exp-details}.

\begin{table}[t]
\caption{Performance of structured-decomposable PCs with two different vtree 
    layouts and a Transformer baseline on the local-copy and induction-style 
    prefix-copy datasets. The standard layout is aligned with local dependencies 
    and the shifted-induction layout is aligned with nonlocal prefix-copy 
    dependencies. A matched layout achieves performance close to the Transformer, 
    while a mismatched layout degrades substantially, consistent with 
    Theorem~\ref{thm:pc-high-rank} and Proposition~\ref{prop:strict-non-attainment}.}
    \centering
\begin{tabular}{llll}
\toprule
              & \begin{tabular}[c]{@{}l@{}}\tt Shifted-induction\\ \tt PC Model\end{tabular} & \tt Standard Model & \tt Transformer \\ \midrule
Adjacent Data & 6.283$\pm$0.002                                                                & 3.158$\pm$0.002          & \textbf{3.018$\pm$0.000}       \\
Distant Data  & 3.144$\pm$0.000                                                                & 6.297$\pm$0.001          & \textbf{3.018$\pm$0.000}      \\
\bottomrule
\end{tabular}
\label{fig:exp1-layout-comparison}
\end{table}

\paragraph{Result.} Table~\ref{fig:exp1-layout-comparison} shows that when the vtree layout is aligned with the data's dependency topology, the PC achieves NLL close to the Transformer. This is consistent with Theorem~\ref{thm:pc-high-rank}: the aligned vtree places the relevant variable interactions along high-rank cuts, allowing the PC to capture them as effectively as a Transformer of comparable size. However, this high-rank capability is not universal.
Formally, the high separation rank from Theorem~\ref{thm:pc-high-rank} is attainable only on a restricted family of vtree-aligned partitions; any degeneracy in the circuit's local computations yields strictly smaller rank (Proposition~\ref{prop:strict-non-attainment} in the appendix). The mismatched layout in Table~\ref{fig:exp1-layout-comparison} confirms this: NLL increases substantially, indicating that the relevant partition falls outside the high-rank family.

\subsection{The Necessity of Topology Alignment}
\label{subsec:transformer-mixed-topology}

The previous experiment shows that a single vtree cannot simultaneously align with multiple dependency topologies. In practice, natural data rarely exhibits a uniform dependency pattern. For instance, language modeling requires both local phrase-level interactions and long-range structural dependencies such as subject-verb agreement across clauses. We further investigate a broader question: is this limitation specific to the PC framework, or does it reflect a more general cost of fixing the routing structure regardless of model family?

To isolate the effect of structural commitment, we compare three Transformers trained on a mixed-topology dataset $\mathcal{D}_{\mathrm{mix}}$ combining local-copy and induction-style tasks: (i) a standard causal Transformer with unconstrained attention, (ii) an \emph{adjacent}-layout variant whose attention masks follow a fixed tree aligned with local dependencies, and (iii) a \emph{distant}-layout variant aligned with induction-style dependencies. Both fixed-routing variants use $\log_2 T$ layers matching the tree depth. We evaluate on both the synthetic mixed-topology dataset $\mathcal{D}_{\mathrm{mix}}$ and PTB to test whether the cost of fixed routing persists on real data.


As shown in Table~\ref{fig:exp2-transformer-routing}, the vanilla Transformer achieves substantially lower NLL than both fixed-routing variants. This confirms that the cost of structural commitment is not specific to PCs: no single fixed topology can simultaneously achieve high separation rank for both local and nonlocal dependencies, whereas input-dependent attention dynamically routes information according to each sample's dependency structure.

\begin{table}[t]
\centering
\renewcommand{\arraystretch}{1.3}
\caption{Validation NLL of Transformers with different attention routing structures trained on the mixed-topology dataset. Vanilla denotes a standard causal Transformer with unconstrained self-attention. Adj Attn and Dist Attn denote Transformers with fixed tree-structured attention masks aligned with local and induction-style dependencies respectively. The vanilla Transformer maintains strong performance across both validation sets, while fixed-routing variants degrade on the topology mismatched with their attention layout, demonstrating that structural commitment to a fixed routing pattern incurs a performance cost under mixed dependency topologies regardless of model family.}
\label{fig:exp2-transformer-routing}
\begin{tabular}{lccc}
\toprule
Mask      & \tt vanilla         & \tt adjacent        & \tt distant         \\ \midrule
synthetic & \textbf{3.077$\pm$0.057} & 3.312$\pm$0.461 & 3.154$\pm$0.016 \\
PTB       & \textbf{3.818$\pm$0.008} & 3.827$\pm$0.007 & 3.871$\pm$0.006 \\
\bottomrule
\end{tabular}
\end{table}

\section{Relaxing Structured Decomposability}
\label{subsec:mixed-structure-pc}
The preceding experiments show that fixed routing is a fundamental bottleneck. In structured-decomposable PCs, this rigidity originates from the vtree, which fixes all scope splits at design time. A natural question is whether we can improve routing flexibility by relaxing structured decomposability to the broader class of decomposable PCs, while retaining the tractability guarantees of the PC framework. We first provide theoretical justification: decomposable PCs are strictly more expressive than structured-decomposable ones.


\begin{thm}[Limitation of structured-decomposable PCs, informal]
\label{thm:selector-separation-informal}
There exists a function $C$ on $n$ variables that admits a polynomial-size decomposable PC, but does not admit any polynomial-size structured-decomposable PC. See Theorem~\ref{thm:selector-separation} for a formal statement.
\end{thm}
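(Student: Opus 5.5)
Our plan is to use a \emph{selector} construction, in the spirit of known separations between decomposable and structured-decomposable circuits (e.g., the results of \citet{de2021compilation}). Take variables $\X=\{S\}\cup\{Y_1,\dots,Y_m\}\cup\{Y'_1,\dots,Y'_m\}$, where $S$ is a selector variable (encoded by $O(\log m)$ bits, or a few Boolean variables). Define $C$ to equal a function $F_1(\mathbf{y},\mathbf{y}')$ when $S=1$ and $F_2(\mathbf{y},\mathbf{y}')$ when $S=2$. The two functions are chosen so that each is "easy" under one variable pairing and "hard" under the other. Concretely, $F_1=\prod_i [y_i = y'_i]$ pairs $Y_i$ with $Y'_i$, and $F_2=\prod_i [y_i = y'_{\pi(i)}]$ pairs $Y_i$ with $Y'_{\pi(i)}$ for a suitably "scrambling" permutation $\pi$, such as a shift or an expander-based permutation. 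The proof then has two halves.

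\textbf{Upper bound.} Write $C=[S=1]\,F_1+[S=2]\,F_2$. Each $F_j$ is a product of $m$ equality constraints on disjoint variable pairs. Each constraint $[y_i=y'_i]$ is a sum over values of products of two indicator leaves, so each $F_j$ has a linear-size decomposable circuit. The root sum node mixes the two branches, each multiplied by a selector indicator. Each branch is decomposable but uses a different variable split, which decomposability allows and structured decomposability forbids. This gives size $O(m)$.

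\textbf{Lower bound.} The key tool is the standard rank argument. If a structured-decomposable PC of size $s$ respects a vtree $\mathcal{T}$, then for some vtree node $v$ whose scope contains between $1/3$ and $2/3$ of the $Y\cup Y'$ variables, $C$ can be written as $\sum_{r\le s} g_r(\x_{S_v})h_r(\x_{\bar S_v})$. Hence $\operatorname{sep}(C;S_v,\bar S_v)\le s$; this is the formal counterpart of the vtree-aligned partition used in \cref{thm:pc-high-rank}. Restricting $S$ to either value, which cannot increase separation rank, gives $\operatorname{sep}(F_j;A,B)\le s$ for both $j$, where $(A,B)$ is the induced balanced partition. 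For $F_1$, the separation rank across $(A,B)$ is at least $2^{k_1}$ (or $|\mathrm{val}|^{k_1}$), where $k_1$ is the number of pairs $(Y_i,Y'_i)$ cut by the partition. The reason is that the communication matrix contains an identity submatrix of that size. The analogous bound holds for $F_2$ with $k_2$ cut $\pi$-pairs. The combinatorial step is to show that every balanced partition cuts $\Omega(m)$ pairs of $\{(Y_i,Y'_i)\}$ or of $\{(Y_i,Y'_{\pi(i)})\}$. The union of the two matchings forms a graph on $2m$ vertices. If $\pi$ is chosen so that this graph is an edge-expander (constant degree $2$ is too weak, so we will instead use several selected matchings $\pi_1,\dots,\pi_q$ with $q=O(1)$ whose union is an expander), then every balanced cut has $\Omega(m)$ crossing edges. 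Some single matching therefore contributes $\Omega(m/q)$ crossing edges. This forces $s\ge 2^{\Omega(m)}$, which is superpolynomial.

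\textbf{Main obstacle.} We expect the hardest part to be the balanced-cut lemma: showing that a vtree always exposes a node with balanced scope, and pairing that with the expander property so that one fixed selector value witnesses high rank. Treating the selector variables, which lie on one side of the cut, and handling sum-node normalization within separation rank are routine. If expanders prove awkward, our fallback is a known explicit family of $O(1)$ permutations (e.g., $i\mapsto i\pm1$, $i\mapsto 2i \bmod m$) whose union has linear bisection width.
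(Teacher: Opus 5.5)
Your argument is correct, and it takes a genuinely different route from the paper's.

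\textbf{The paper's route.} The paper proves the formal version (Theorem~\ref{thm:selector-separation}) as a reduction. It assumes two functions $A,B$ that each admit polynomial-size structured-decomposable PCs but have no common vtree on which both do. It forms $\mathbf{1}[s=1]A+\mathbf{1}[s=0]B$, the same selector circuit you use for the upper bound. It then observes that conditioning a structured-decomposable PC for this function on $s=1$ and $s=0$ yields PCs for $A$ and $B$ over the same restricted vtree, a contradiction.

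\textbf{Your route.} Your restriction of $S$ is exactly this conditioning step. Where the paper stops at a hypothesis, you instantiate the branches as equality products over $q$ perfect matchings whose union is a bipartite expander, and you prove the incompatibility directly. The $s$-term decomposition at a balanced vtree node, the identity-submatrix bound $|\mathrm{val}|^{k_j}$ on separation rank, and pigeonhole over the $\Omega(m)$ crossing edges together give an explicit $2^{\Omega(m)}$ lower bound.

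\textbf{What each buys.} The paper's route buys brevity and modularity: any incompatible pair works. Yours buys an unconditional proof of the existential claim. The paper never exhibits functions satisfying its hypothesis (3), and its restructuring proposition does not supply them, since it concerns polynomial-time algorithms under $\mathrm{FP}\neq\#\mathrm{P}$. Your observation that two matchings fail shows that finding such a pair is not automatic: their union is a disjoint union of cycles, so a linear vtree along the cycles keeps both branches cheap. Your argument pays for this with a $q$-valued selector. With $q=3$ you could still fit the paper's template: let $A$ combine the first two branches behind an auxiliary selector bit, which is easy on a cycle-following vtree, and let $B$ be the third branch. Your expander bound is then exactly a proof of hypothesis (3).

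\textbf{Two minor points.} First, the balanced-node lemma you flag as the main obstacle is routine: descend into the heavier child until the weight first drops to at most $2/3$ of the total. Second, your fallback family $i\mapsto i\pm1$, $i\mapsto 2i \bmod m$ is not a standard expander, and $i\mapsto 2i$ is not even a bijection for even $m$. It is cleaner to invoke the existence of $q$-regular bipartite expanders (e.g., random ones) and split them into $q$ perfect matchings by K\"onig's theorem.
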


This inspires us to investigate empirically how much this relaxation matters in practice. We evaluate two regimes: a controlled synthetic setting and two real language modelling benchmarks (PTB~\citep{marcus1993building} and UD-ATIS~\citep{ud_english_atis}. Full details are in Appendix~\ref{app:exp-details-real}.

\paragraph{Synthetic Setting.}
We combine the standard and shifted-induction PCs from the previous experiments via a shared sum node at the root. The resulting model is decomposable but not structured-decomposable, as its branches impose conflicting scope-splitting patterns. 
\paragraph{Real Datasets.}
For PTB and UD-ATIS, we adopt a \emph{specialist initialization} strategy: we independently train structured-decomposable components, each under a distinct vtree layout and trained on the subset of data whose dependency topology best matches that layout. We compare this against a \emph{scratch mixture} trained end-to-end with matched parameter budget. 


\paragraph{Results.}
On the synthetic $\mathcal{D}_{\mathrm{mix}}$, the fused decomposable PC achieves the lowest NLL among all PC variants, validating Theorem~\ref{thm:selector-separation-informal}: relaxing structured decomposability allows the model to combine complementary routing structures, recovering expressiveness that no single vtree can provide.
However, the gains diminish as task complexity increases (Table~\ref{tab:specialist-init}). On UD-ATIS, the mixture still improves over the best single-vtree PC, but the margin is smaller than on the synthetic task. On PTB, the initialized mixture barely improves over the single-vtree baseline. This points to a key open challenge: while the theoretical expressiveness gap between decomposable and structured-decomposable PCs is clear, current methods for constructing and optimizing decomposable PCs do not yet scale effectively enough to close the gap with Transformers on real-world distributions.

\begin{table}[t]
  \centering
  \caption{Comparison of mixture training strategies (test NLL; lower is better). \emph{Best Single}: the best-performing single structured-decomposable PC selected by validation. \emph{Init Mix}: each component is a structured-decomposable PC pre-trained under a distinct vtree layout, combined via a sum node. \emph{Scratch Mix}: a decomposable mixture with the same architecture trained end-to-end. Bold indicates the better mixture variant.}
  \label{tab:specialist-init}
  \begin{tabular}{lccc}
    \toprule
    Dataset & \tt Best Single & \tt Init Mix & \tt Scratch Mix \\
    \midrule
    $\mathcal{D}_{\mathrm{mix}}$ (synthetic) & 6.219$\pm$0.005 & \textbf{5.431$\pm$0.018} & 6.117$\pm$0.030 \\
    UD-ATIS                                   & 6.975$\pm$0.035 & 6.779$\pm$0.012 & \textbf{6.755$\pm$0.031} \\
    PTB                                       & \textbf{6.527$\pm$0.000} & \textbf{6.527$\pm$0.000} & 6.531$\pm$0.000 \\
    \bottomrule
  \end{tabular}
\end{table}

\section{Related Work}

\paragraph{Expressivity of Probabilistic Circuits.}
Extensive research has analyzed the theoretical capacity of PCs. Martens and Medabalimi~\citep{martens2014expressive} proved fundamental limitations of decomposable and complete SPNs by showing that certain tractable distributions cannot be represented by such SPNs without exponential size. De Colnet and Mengel~\citep{de2021compilation} provided a systematic succinctness map for arithmetic circuits under combinations of structural restrictions including decomposability, determinism, and structured decomposability, proving unconditional exponential separations between several circuit classes. In a complementary direction, it has been shown that, in the context of SPNs, consistency does not yield an exponential succinctness advantage over decomposability, as any complete and consistent SPN can be converted into a decomposable one with only a polynomial increase in size~\citep{peharz2015theoretical}.
Other theoretical perspectives include sample complexity limits~\citep{aden2020sample}, relationships between circuit classes~\citep{wang2025relationship}, and the hardness of distribution approximation~\citep{leland2025hardness}. Particularly relevant to our analysis is the formal connection between PCs and structured tensor factorizations~\citep{loconte2024relationship}.





\paragraph{Theoretical Analysis of Transformers.}

Prior work has extensively analyzed the expressivity of Large Language Models, identifying a ``softmax bottleneck'' where the low-dimensional latent space restricts the rank of the predicted output distribution \citep{wies2021transformer,yang2017breaking}. However, recent work has shown that this bottleneck may not significantly limit the probabilities assigned to the most likely tokens~\citep{basri2026softmax}. Beyond expressivity, Godey and Artzi~\citep{godey2026lost} further showed that this bottleneck also impedes optimization, as the rank-deficient output projection suppresses the vast majority of the gradient signal during backpropagation. Furthermore, theoretical studies have quantified the context-encoding capacity of self-attention using separation rank, demonstrating that Transformers can dynamically maintain high-rank interactions across arbitrary input partitions \citep{levine2021inductive,yang2024etas}.Our work bridges these analyses with tractable modeling, directly contrasting the Transformer's low-rank logit-space outputs and dynamic routing with the probability-space mixtures of PCs.



\paragraph{Empirical Advancements in Probabilistic Circuits.}
Recent work has significantly scaled Probabilistic Circuits to complex, high-dimensional distributions. A major line of this research focuses on structural innovations \citep{adel2015learning, liu2021tractable, loconte2024relationship, zhang2025scaling,liu2022scaling,liu2023understanding}. However, these architectures predominantly rely on strict structured decomposability to guarantee efficient inference. Beyond structural design, the empirical success of modern PCs is further driven by continuous advancements in parameter learning \citep{poon2011sum, peharz2020einsum, liu2025rethinking} and systems frameworks \citep{molina2019spflow, liu2024scaling}.

\section{Conclusion and Limitations}\label{sec:conclusion}

In this work, we established a unified autoregressive framework to directly compare the expressive power of PCs and LLMs. Our analysis reveals two structural bottlenecks underlying the expressivity gap. First, an output bottleneck: PCs parameterize predictions as convex combinations in probability space, which struggles to represent sharp distributions; adopting a logit-space parameterization substantially narrows this gap. Second, a context encoding bottleneck: we proved that structured-decomposable PCs can match Transformer separation rank, but only on vtree-aligned partitions, and that this limitation is fundamental to structured decomposability, as decomposable PCs are provably more expressive.
These findings suggest two concrete directions for improving tractable probabilistic models: exploring output parameterizations beyond probability-space convex combinations, and developing learning algorithms that can effectively exploit the broader class of decomposable PCs. However, our analysis has limitations. Our separation rank results characterize worst-case expressivity gaps, and the context encoding bottleneck is primarily validated on synthetic datasets with controlled dependency topologies. Moreover, while decomposable PCs are provably more expressive than structured-decomposable ones, realizing this advantage in practice remains an open challenge that we leave for future work.

\bibliographystyle{plain}
\bibliography{ref}

\newpage
\appendix
\section{Proofs}
\label{app:proofs}

\subsection{Preliminary Definitions}

For a structured-decomposable PC with vtree $\mathcal{T}$ over variables 
$X = \{X_1, \dots, X_T\}$, we introduce the following definitions used throughout 
the proofs.

\paragraph{Local state dimension.} For each vtree node $v$, let $\mathcal{H}_v$ 
denote the space of functions exported from the subtree rooted at $v$ toward the 
rest of the circuit. We define the \emph{local state dimension}
\[
    d_v := \dim(\mathcal{H}_v).
\]
In a normalized, non-redundant implementation, $d_v$ coincides with the interface 
width $|I_v|$; in general, $d_v \leq |I_v|$.

\paragraph{Frontier sets.} Given a partition $X = A \sqcup B$, a vtree node $v$ 
is called \emph{$A$-pure} if $U_v \subseteq A$, \emph{$B$-pure} if 
$U_v \subseteq B$, and \emph{mixed} otherwise. We define the maximal pure frontier 
sets as:
\begin{itemize}
    \item $\mathcal{C}_A$: the set of all $A$-pure nodes whose parent is mixed;
    \item $\mathcal{C}_B$: the set of all $B$-pure nodes whose parent is mixed.
\end{itemize}

\paragraph{Inductive rank quantity.} Define $R_A(v)$ as the minimal integer $R$ 
such that there exist functions $\phi_1^v, \dots, \phi_R^v : \mathcal{X}_{U_v 
\cap A} \to \mathbb{R}$ with the property that for every $f \in \mathcal{H}_v$, 
there exist coefficient functions $\alpha_1^f, \dots, \alpha_R^f : 
\mathcal{X}_{U_v \cap B} \to \mathbb{R}$ satisfying
\[
    f(x_{U_v}) = \sum_{i=1}^{R} \phi_i^v(x_{U_v \cap A})\, \alpha_i^f(x_{U_v \cap B}).
\]
Intuitively, $R_A(v)$ is the minimum number of basis functions on the $A$-side 
needed to represent all exported functions from subtree $v$, with $B$-dependent 
coefficients.

\subsection{Proof of Theorem \ref{thm:pc-high-rank}}

\begin{lemma}\label{lem:subtree-basis}
For any vtree node $v$ with children $L$ and $R$:
\begin{enumerate}
    \item If $v$ is $A$-pure, then $R_A(v) \leq d_v$.
    \item If $v$ is $B$-pure, then $R_A(v) = 1$.
    \item If $v$ is mixed, then $R_A(v) \leq R_A(L) \cdot R_A(R)$.
\end{enumerate}
\end{lemma}

\begin{proof}
\textbf{Case 1} ($v$ is $A$-pure). Since $U_v \cap B = \emptyset$, all functions 
in $\mathcal{H}_v$ depend only on $x_{U_v \cap A} = x_{U_v}$. Let 
$\{\phi_i^v\}_{i=1}^{d_v}$ be any basis of $\mathcal{H}_v$. Then every 
$f \in \mathcal{H}_v$ can be written as
\[
    f(x_{U_v}) = \sum_{i=1}^{d_v} c_i^f\, \phi_i^v(x_{U_v}),
\]
where $c_i^f \in \mathbb{R}$ are constants, which we view as trivial functions on 
the empty $B$-side. Hence $R_A(v) \leq d_v$.

\textbf{Case 2} ($v$ is $B$-pure). Since $U_v \cap A = \emptyset$, every 
$f \in \mathcal{H}_v$ does not depend on $x_A$ at all. Taking $\phi_1^v \equiv 1$ 
as the single basis function and $\alpha_1^f = f(x_{U_v})$ as the coefficient, 
we obtain $R_A(v) = 1$.

\textbf{Case 3} ($v$ is mixed). Let $r_L = R_A(L)$ and $r_R = R_A(R)$. By 
definition, there exist basis functions $\{\phi_i^L\}_{i=1}^{r_L}$ on 
$U_L \cap A$ and $\{\phi_j^R\}_{j=1}^{r_R}$ on $U_R \cap A$ such that for any 
$f_L \in \mathcal{H}_L$ and $f_R \in \mathcal{H}_R$:
\[
    f_L = \sum_{i=1}^{r_L} \phi_i^L\, \alpha_i^{f_L}, \qquad
    f_R = \sum_{j=1}^{r_R} \phi_j^R\, \beta_j^{f_R}.
\]
Since the PC is structured-decomposable, any product gate at $v$ combines a 
function $f_L \in \mathcal{H}_L$ with a function $f_R \in \mathcal{H}_R$ over 
disjoint scopes. Their product satisfies:
\begin{align*}
    f_L \cdot f_R 
    =& \left(\sum_i \phi_i^L \alpha_i^{f_L}\right)
      \left(\sum_j \phi_j^R \beta_j^{f_R}\right)\\
    =& \sum_{i=1}^{r_L} \sum_{j=1}^{r_R}
      \underbrace{(\phi_i^L \phi_j^R)}_{\text{depends on } U_v \cap A}
      \underbrace{(\alpha_i^{f_L} \beta_j^{f_R})}_{\text{depends on } U_v \cap B}.
\end{align*}
The tensor-product functions $\{\phi_i^L \phi_j^R\}$ span the $A$-side of any 
product term, yielding at most $r_L r_R$ basis functions. Sum gates take linear 
combinations of functions with the same scope and do not increase the span 
dimension. Therefore all functions in $\mathcal{H}_v$ are spanned by these 
$r_L r_R$ tensor-product basis functions, giving
\[
    R_A(v) \leq R_A(L) \cdot R_A(R). \qedhere
\]
\end{proof}


\begin{lemma}\label{lem:general-cut-sep-rank}
Let $y$ be the output function of a structured-decomposable PC over variables $X$, 
with vtree $\mathcal{T}$. For any partition $X = A \sqcup B$,
\[
    \operatorname{sep}(y;\, A, B)
    \;\leq\;
    \min\!\left\{
        \prod_{v \in \mathcal{C}_A} d_v,\;
        \prod_{u \in \mathcal{C}_B} d_u
    \right\}.
\]
\end{lemma}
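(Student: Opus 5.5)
We prove the bound $\operatorname{sep}(y;A,B)\le\prod_{v\in\mathcal{C}_A} d_v$ by bounding $R_A$ at the root and converting it into a separation rank. The bound with $\mathcal{C}_B$ then follows by the symmetric argument with the roles of $A$ and $B$ exchanged.

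\textbf{Step 1: from $R_A$ at the root to separation rank.} The output $y$ is an exported function of the root $\rho$, so $y\in\mathcal{H}_\rho$. By the definition of $R_A(\rho)$ we can write
\[
y(x_A,x_B)=\sum_{i=1}^{R_A(\rho)}\phi_i^\rho(x_A)\,\alpha_i^y(x_B),
\]
which is a separable decomposition with $R_A(\rho)$ terms. Hence $\operatorname{sep}(y;A,B)\le R_A(\rho)$. If $\rho$ is itself pure, the bound is trivial, with separation rank $1$; we adopt the convention that an empty product equals $1$.

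\textbf{Step 2: unroll the recursion over the mixed part of the vtree.} Assume $\rho$ is mixed. Since a node is mixed exactly when its scope meets both $A$ and $B$, every ancestor of a mixed node is also mixed. The mixed nodes therefore form a connected subtree containing the root. Its ``boundary children'', meaning pure children of mixed nodes, are exactly the elements of $\mathcal{C}_A\cup\mathcal{C}_B$.

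We then apply Lemma~\ref{lem:subtree-basis}(3) at each mixed node, by induction from the bottom of this mixed subtree upward. The induction hypothesis is that for every mixed node $v$,
\[
R_A(v)\le\prod_{u}R_A(u),
\]
where $u$ ranges over the pure frontier descendants of $v$ whose path from $v$ passes only through mixed nodes. Note that a mixed node cannot be a leaf, since a leaf has a single variable and is always pure, so every mixed node has two children and part (3) applies.

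Each factor is then bounded using Lemma~\ref{lem:subtree-basis}(1)--(2): $R_A(u)\le d_u$ for $u\in\mathcal{C}_A$, and $R_A(u)=1$ for $u\in\mathcal{C}_B$. This gives
\[
R_A(\rho)\le\prod_{v\in\mathcal{C}_A}d_v.
\]

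\textbf{Step 3: symmetry.} We define $R_B(v)$ analogously, with $B$-side basis functions and $A$-dependent coefficients. The proof of Lemma~\ref{lem:subtree-basis} goes through verbatim with $A$ and $B$ swapped, because the separability is symmetric. This yields $\operatorname{sep}(y;A,B)\le\prod_{u\in\mathcal{C}_B}d_u$, and taking the minimum of the two bounds completes the proof.

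\textbf{Main obstacle.} The main point needing care is the bookkeeping in Step 2. We must check that the leaves of the recursion are exactly the frontier sets, with each frontier node counted once and no pure node's subtree unrolled further. This follows from the disjointness of vtree scopes: each pure frontier node $u$ has a unique mixed parent, so it contributes a single factor $R_A(u)$.

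A second point to verify is that $y$ genuinely lies in $\mathcal{H}_\rho$, and that Case 3 of Lemma~\ref{lem:subtree-basis} covers all gates at a mixed node. This includes sum gates, which aggregate product gates at the same vtree node, and chains of sum gates. This is already argued in the lemma, which we may assume.
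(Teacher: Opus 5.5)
Your proposal is correct and follows the paper's proof essentially step for step. You bound $\operatorname{sep}(y;A,B)$ by $R_A$ at the root, unroll Lemma~\ref{lem:subtree-basis}(3) over the mixed nodes down to the frontier $\mathcal{C}_A\cup\mathcal{C}_B$, bound the frontier factors by $d_v$ and $1$ via parts (1)--(2), and conclude by symmetry; your extra bookkeeping (mixed nodes forming a root-containing subtree, no mixed node being a leaf, and the pure-root edge case) just makes explicit what the paper's ``apply recursively'' leaves implicit.
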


\begin{proof}
Since $y \in \mathcal{H}_r$ (the output function belongs to the root's exported 
function space), the definition of $R_A(r)$ directly gives a separation 
representation of $y$ with $R_A(r)$ terms, hence
\[
    \operatorname{sep}(y;\, A, B) \leq R_A(r).
\]
Applying Lemma~\ref{lem:subtree-basis} recursively from the root down to the 
frontier $\mathcal{C}_A \cup \mathcal{C}_B$, at each mixed node the bound 
multiplies, and at each pure node it terminates. This gives:
\begin{align*}
    R_A(r)
    \;\leq\;&
    \prod_{v \in \mathcal{C}_A} R_A(v) \cdot \prod_{u \in \mathcal{C}_B} R_A(u)\\
    \;\leq\;&
    \prod_{v \in \mathcal{C}_A} d_v \cdot \prod_{u \in \mathcal{C}_B} 1
    \;=\;
    \prod_{v \in \mathcal{C}_A} d_v.
\end{align*}
By the symmetric argument (defining $R_B(v)$ analogously and swapping the roles 
of $A$ and $B$), we also obtain
\[
    \operatorname{sep}(y;\, A, B) \leq \prod_{u \in \mathcal{C}_B} d_u.
\]
Taking the minimum of the two bounds completes the proof.
\end{proof}


\begin{lemma}\label{thm:exact-sep-rank}
For any non-root vtree node $v$, let $A = U_v$ and $B = X \setminus U_v$. Let 
$\{\phi_1, \dots, \phi_{d_v}\}$ be a basis for $\mathcal{H}_v$, and write
\[
    y(x_A, x_B) = \sum_{i=1}^{d_v} \phi_i(x_A)\, \alpha_i(x_B).
\]
Define the \emph{effective transfer rank} $r_v := \dim \operatorname{span}
\{\alpha_1, \dots, \alpha_{d_v}\}$. Then
\[
    \operatorname{sep}(y;\, A, B) = r_v \;\leq\; d_v.
\]
\end{lemma}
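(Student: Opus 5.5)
The plan is to reduce the claim to a matrix-rank statement. We view $y$ as a matrix $M_y$ with rows indexed by assignments $x_A$ and columns by assignments $x_B$. Since all variables are discrete, the separation rank $\operatorname{sep}(y;A,B)$ is exactly $\operatorname{rank}(M_y)$. Indeed, any representation $y=\sum_{r=1}^R g_r h_r$ writes $M_y$ as a sum of $R$ rank-one matrices. Conversely, a rank factorization of $M_y$ gives such a representation with $R=\operatorname{rank}(M_y)$. We also note that the decomposition $y=\sum_i\phi_i\alpha_i$ exists in the first place. Because the vtree is structured and $A=U_v$ is the scope of a single vtree node, $y$ depends on $x_A$ only through the functions exported from subtree $v$; this is Case 1 of \cref{lem:subtree-basis} applied at $v$ together with the mixed-node propagation up to the root. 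Expanding those exported functions in the basis $\{\phi_i\}$ and collecting the $B$-dependent coefficients yields the functions $\alpha_i(x_B)$.

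\textbf{Step 1 (factorization).} Write $M_y=\Phi\,\mathcal{A}^\top$. Here $\Phi$ is the $|\mathcal{X}_A|\times d_v$ matrix with columns $\phi_i$, and $\mathcal{A}$ is the $|\mathcal{X}_B|\times d_v$ matrix with columns $\alpha_i$.

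\textbf{Step 2 (upper bound).} From the factorization, $\operatorname{rank}(M_y)\le\operatorname{rank}(\mathcal{A})=r_v\le d_v$. The first inequality is the standard rank bound for a product. The last inequality holds because $r_v$ is the dimension of a span of $d_v$ vectors.

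\textbf{Step 3 (lower bound).} Since $\{\phi_i\}$ is a basis of $\mathcal{H}_v$, it is linearly independent as functions on $\mathcal{X}_A$. So $\Phi$ has full column rank $d_v$ and therefore has a left inverse $\Phi^+$ with $\Phi^+\Phi=I_{d_v}$. Then $\mathcal{A}^\top=\Phi^+M_y$, which gives $r_v=\operatorname{rank}(\mathcal{A})\le\operatorname{rank}(M_y)$. Combined with Step 2, this yields $\operatorname{sep}(y;A,B)=r_v\le d_v$.

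The main obstacle is being careful that the basis functions are genuinely linearly independent as functions on the finite domain $\mathcal{X}_A$. This is what makes $\Phi$ injective and is the only place equality, rather than just an upper bound, is earned. It holds by the definition of $d_v=\dim(\mathcal{H}_v)$ as the dimension of a function space on $\mathcal{X}_{U_v}$, so I would state it explicitly. The rest is elementary linear algebra. I would also remark that the functions $\alpha_i$ are uniquely determined by $y$ given the basis, which is exactly the left-inverse argument, so $r_v$ is well defined independently of how the expansion was obtained.
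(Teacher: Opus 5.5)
Your proof is correct, and it takes a somewhat different and cleaner route than the paper. The paper stays in function space. For the upper bound it re-expands the $\alpha_i$ in a basis $\{\beta_j\}$ of their span. For the lower bound it supposes a shorter decomposition $y=\sum_{k=1}^m g_k h_k$ with $m<r_v$, expands each $g_k$ in $\{\phi_i\}$, and matches coefficients to get $\operatorname{span}\{\alpha_i\}\subseteq\operatorname{span}\{h_k\}$. That step presupposes $g_k\in\mathcal{H}_v$, which the paper never justifies. It is true for a minimal decomposition, but only because the $A$-side factors of a minimal factorization span the column space of $M_y$, and that column space lies in $\operatorname{span}\{\phi_i\}$. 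Your identification $\operatorname{sep}(y;A,B)=\operatorname{rank}(M_y)$, combined with $\mathcal{A}^\top=\Phi^+M_y$, gives $r_v\le\operatorname{rank}(M_y)$ directly, without examining any competing decomposition, so that issue never arises. Both arguments rest on the same fact, the linear independence of the $\phi_i$ on $\mathcal{X}_A$. Your side remarks are two points the paper's statement takes for granted: that the expansion $y=\sum_i\phi_i\alpha_i$ exists (Case 1 at $v$, then propagation through the mixed ancestors, whose other children are $B$-pure), and that the $\alpha_i$ are uniquely determined. Together they make explicit that $r_v=\operatorname{rank}(M_y)$ does not depend on the choice of basis. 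The paper's formulation avoids the matrix identification, but in its finite categorical setting that identification is immediate, so nothing is lost.
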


\begin{proof}
\textbf{Upper bound} ($\operatorname{sep} \leq r_v$). Let $\{\beta_1, \dots, 
\beta_{r_v}\}$ be a basis for $\operatorname{span}\{\alpha_i\}$, so that 
$\alpha_i = \sum_{j=1}^{r_v} c_{ij} \beta_j$. Substituting:
\[
    y(x_A, x_B)
    = \sum_{j=1}^{r_v} \underbrace{\left(\sum_{i=1}^{d_v} c_{ij}\,
      \phi_i(x_A)\right)}_{\text{depends only on }A}
      \beta_j(x_B),
\]
which is a decomposition into $r_v$ rank-one terms. Hence 
$\operatorname{sep}(y; A, B) \leq r_v$.

\textbf{Lower bound} ($\operatorname{sep} \geq r_v$). Suppose for contradiction 
that $\operatorname{sep}(y; A, B) = m < r_v$, so that
\[
    y(x_A, x_B) = \sum_{k=1}^{m} g_k(x_A)\, h_k(x_B).
\]
Since $\{\phi_i\}$ is a basis for $\mathcal{H}_v$, each $g_k$ can be written as 
$g_k = \sum_{i=1}^{d_v} a_{ik} \phi_i$. Substituting and comparing coefficients 
of each $\phi_i$ (which are linearly independent):
\[
    \alpha_i(x_B) = \sum_{k=1}^{m} a_{ik}\, h_k(x_B) \quad \forall\, i.
\]
This implies $\operatorname{span}\{\alpha_i\} \subseteq \operatorname{span}
\{h_1, \dots, h_m\}$, so $r_v \leq m$, a contradiction. Therefore 
$\operatorname{sep}(y; A, B) \geq r_v$.

Combining both bounds gives $\operatorname{sep}(y; A, B) = r_v$, and 
$r_v \leq d_v$ holds by construction.
\end{proof}

\begin{corollary}[Exact attainment of the upper bound]
\label{cor:exact-attainment}
If the transfer at node $v$ is full-rank, i.e., $r_v = d_v$, then
\[
    \operatorname{sep}(y;\, U_v,\, X \setminus U_v) = d_v,
\]
and the frontier-product upper bound is exactly attained on the vtree-induced 
cut $(U_v,\, X \setminus U_v)$.
\end{corollary}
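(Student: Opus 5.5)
This follows directly from Lemma~\ref{thm:exact-sep-rank}, so the plan is to substitute the full-rank hypothesis into it and then identify the resulting value with the frontier product of Lemma~\ref{lem:general-cut-sep-rank}.

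\emph{Step 1: the separation rank.} Fix the non-root vtree node $v$ and set $A = U_v$, $B = X \setminus U_v$. Choose a basis $\{\phi_i\}_{i=1}^{d_v}$ of $\mathcal{H}_v$ and write $y = \sum_i \phi_i(x_A)\,\alpha_i(x_B)$, as in Lemma~\ref{thm:exact-sep-rank}. That lemma gives $\operatorname{sep}(y; A, B) = r_v$. Under the hypothesis $r_v = d_v$ this is exactly $d_v$.

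One point needs a remark: $r_v$ should not depend on the chosen basis. A change of basis of $\mathcal{H}_v$ acts on the coefficient tuple $(\alpha_i)$ by an invertible linear map, so the span of the $\alpha_i$ is unchanged. Thus the hypothesis $r_v = d_v$ is well posed.

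\emph{Step 2: the frontier product equals $d_v$.} Take the cut $(U_v, X\setminus U_v)$.
\begin{itemize}
\item Every descendant of $v$ is $A$-pure, and every node outside the subtree of $v$ that is not an ancestor of $v$ is $B$-pure.
\item The ancestors of $v$ all have scopes containing both $U_v$ and some variable outside it, because $v$ is not the root and the vtree is full binary. Hence they are exactly the mixed nodes.
\item It follows that $\mathcal{C}_A = \{v\}$, since $v$ is the only $A$-pure node with a mixed parent.
\item $\mathcal{C}_B$ consists of the siblings of $v$ and of each of its ancestors.
\end{itemize}
Therefore $\prod_{u \in \mathcal{C}_A} d_u = d_v$, and the bound of Lemma~\ref{lem:general-cut-sep-rank} reads $\operatorname{sep} \le \min\{d_v, \prod_{u\in\mathcal{C}_B} d_u\} \le d_v$.

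\emph{Step 3: conclusion.} Step 1 shows the separation rank equals $d_v$. The frontier bound is at most $d_v$, yet it upper-bounds a quantity equal to $d_v$, so it must equal $d_v$ exactly. In particular the minimum is attained by the $A$-side product, and the bound is tight on this cut.

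\emph{Main obstacle.} The delicate step is the frontier bookkeeping in Step~2: checking that $\mathcal{C}_A$ is precisely $\{v\}$, so that the bound specializes to $d_v$ and not some larger product. This hinges on $v$ being non-root and the vtree being full binary, so that every proper ancestor is mixed. Everything else is direct substitution into Lemma~\ref{thm:exact-sep-rank}.
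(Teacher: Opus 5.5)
Your proposal is correct and follows the same route as the paper, which treats the corollary as immediate: substitute $r_v = d_v$ into Lemma~\ref{thm:exact-sep-rank}, and note that on the cut $(U_v, X\setminus U_v)$ the frontier set is $\mathcal{C}_A = \{v\}$, so the frontier product of Lemma~\ref{lem:general-cut-sep-rank} reduces to $d_v$. Your explicit frontier bookkeeping (which relies on $v$ being non-root) and your remark that $r_v$ does not depend on the chosen basis are details the paper leaves unstated.
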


Combine above results, we have the following result.
For any structured-decomposable PC with vtree $\mathcal{T}$, there exists a partition 
$(A^\star, B^\star)$ of the variables that is aligned with $\mathcal{T}$ such that 
\begin{align*}
    \log \operatorname{sep}(y; A^\star, B^\star) \leq& \min\{\sum_{v \in \mathcal{C}_{A^\star}} \log d_v, \sum_{v \in \mathcal{C}_{B^\star}} \log d_v\}\\
    =& \widetilde{O}(T \log d),
\end{align*}

\subsection{Proof of Strict Non-Attainment of the Frontier Bound}
\begin{prop}[Strict non-attainment of the frontier bound]
\label{prop:strict-non-attainment}
Let $U_A := \prod_{v \in \mathcal{C}_A} d_v$. 
If at least one of the following 
conditions holds, then $\operatorname{sep}(y; A, B) < U_A$:
\begin{enumerate}
    \item \textbf{Frontier under-utilization:} there exists $v \in \mathcal{C}_A$ 
    such that $R_A(v) < d_v$;
    \item \textbf{Non-full tensor growth at a mixed node:} there exists a mixed 
    node $t$ with children $L, R$ such that $R_A(t) < R_A(L) \cdot R_A(R)$;
    \item \textbf{Root output does not span the full root basis:} 
    $\operatorname{sep}(y; A, B) < R_A(r)$.
\end{enumerate}
\end{prop}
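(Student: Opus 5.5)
The plan is to reuse the chain of inequalities behind Lemma~\ref{lem:general-cut-sep-rank}:
\[
\operatorname{sep}(y;A,B)\;\le\;R_A(r)\;\le\;\prod_{v\in\mathcal{C}_A}R_A(v)\cdot\prod_{u\in\mathcal{C}_B}R_A(u)\;\le\;\prod_{v\in\mathcal{C}_A}d_v=U_A .
\]
We check that each of the three conditions makes one link of this chain strict. Since every quantity is a positive integer, one strict link and weak inequalities everywhere else give $\operatorname{sep}(y;A,B)<U_A$.

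\textbf{Setting up the chain.} I would first restate the middle inequality as a statement about the mixed subtree. The mixed nodes form a connected subtree $M$ containing the root (if the root is not mixed, the cut is trivial and I would treat that degenerate case separately or exclude it). The leaves hanging off $M$ are exactly the frontier $\mathcal{C}_A\cup\mathcal{C}_B$. By induction over $M$ from the bottom up, Lemma~\ref{lem:subtree-basis}(3) gives, for every mixed $t$,
\[
R_A(t)\le\prod_{v\in\mathcal{C}_A\cap\mathrm{desc}(t)}R_A(v)\cdot\prod_{u\in\mathcal{C}_B\cap\mathrm{desc}(t)}R_A(u),
\]
and the $B$-factors equal $1$ by Lemma~\ref{lem:subtree-basis}(2).

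\textbf{The three conditions.} Condition 3 is immediate: $\operatorname{sep}<R_A(r)\le U_A$.

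For condition 2, I would propagate the strictness upward. Suppose $R_A(t)<R_A(L)R_A(R)$ at a mixed node $t$. At the parent $p$ of $t$, with sibling $s$, we have $R_A(p)\le R_A(t)R_A(s)$, and $R_A(s)\ge1$. Here I need positivity: $R_A(s)\ge1$ provided $\mathcal{H}_s\neq\{0\}$, which I assume holds for a nontrivial circuit. Then $R_A(t)R_A(s)<R_A(L)R_A(R)R_A(s)\le(\text{frontier product below }p)$. Iterating up to the root gives $R_A(r)<U_A$.

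Condition 1 works the same way. Starting from the frontier node $v$ with $R_A(v)<d_v$, I use $R_A(v)\le d_v-1$ in the product and propagate multiplicatively through the ancestors of $v$ in $M$. Every other factor is at least $1$, so the final product is strictly less than $U_A$.

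\textbf{Main obstacle.} The hardest step is making the multiplicative propagation legitimate. It needs every $R_A(\cdot)\ge1$, which amounts to nonzero exported spaces; I would state this as a standing nondegeneracy assumption or derive it from normalization of the PC. It also needs a careful definition of the frontier so that the mixed nodes really form a subtree whose boundary is $\mathcal{C}_A\cup\mathcal{C}_B$. Once this bookkeeping is fixed, each case is a one-line strict inequality.
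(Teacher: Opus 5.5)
Your proposal is correct and follows the paper's proof: both use the chain $\operatorname{sep}(y;A,B)\le R_A(r)\le\prod_{v\in\mathcal{C}_A}R_A(v)\cdot\prod_{u\in\mathcal{C}_B}R_A(u)\le U_A$ and show that each condition makes one link strict. Your extra bookkeeping spells out what the paper compresses into ``the strict inequality propagates through the remaining steps'': the mixed nodes form a root-containing subtree whose boundary is the frontier, the root must be mixed, and the multiplicative propagation needs $R_A(\cdot)\ge 1$ and $d_v\ge 1$.
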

\begin{proof}
From the recursive bound established in the proof of 
Theorem~\ref{lem:general-cut-sep-rank}, we have the chain of inequalities:
\begin{align*}
    \operatorname{sep}(y;\, A, B)
    \;\leq\; R_A(r)
    \;\leq\;& \prod_{v \in \mathcal{C}_A} R_A(v) \cdot \prod_{u \in \mathcal{C}_B} R_A(u)\\
    \;\leq\;& \prod_{v \in \mathcal{C}_A} d_v
    \;=\; U_A,
\end{align*}
where the last factor uses $R_A(u) = 1$ for all $u \in \mathcal{C}_B$. If any of 
the three conditions holds, the corresponding inequality in this chain becomes 
strict, and the strict inequality propagates through the remaining steps, yielding 
$\operatorname{sep}(y; A, B) < U_A$.
\end{proof}

Proposition~\ref{prop:strict-non-attainment} complements 
Corollary~\ref{cor:exact-attainment}: while the corollary identifies when the 
bound is exactly attained (full-rank transfer at a vtree-induced cut), the 
proposition identifies three structural sources of slack. In practice, conditions 
(1) and (2) arise naturally when parameters are degenerate or when the circuit has 
redundant components; condition (3) reflects the additional gap between the root's 
internal representation and the scalar output $y$.

\subsection{Proof of Theorem~\ref{thm:selector-separation-informal}}
\begin{prop}[No universal polynomial restructuring]
\label{prop:no-universal-restructuring}
Assume $\mathrm{FP} \neq \#\mathrm{P}$.
Then there is no polynomial-time algorithm $\mathcal{R}$ with the following property:

for every polynomial-size structured-decomposable and deterministic PC $P$ and every target
vtree $T$, the algorithm $\mathcal{R}(P,T)$ outputs an equivalent polynomial-size
structured-decomposable and deterministic PC respecting $T$.
\end{prop}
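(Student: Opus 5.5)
We prove Proposition~\ref{prop:no-universal-restructuring} by contradiction: a universal restructuring algorithm $\mathcal{R}$ would let us solve a $\#\mathrm{P}$-hard counting problem in polynomial time.

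The natural reduction uses \emph{conjunction}, i.e., model counting of a product of two circuits. Structured-decomposable and deterministic PCs that respect the \emph{same} vtree are closed under multiplication. Given such PCs $P_1,P_2$ over a common vtree $T$, the standard product construction builds a structured-decomposable, deterministic PC for $P_1\cdot P_2$ in time $O(|P_1||P_2|)$. This is the known compatible-circuit multiplication result: the construction pairs nodes with equal vtree node, multiplies product gates child-wise, and forms sums over pairs, which preserves determinism since supports intersect. The sum of the resulting circuit over all assignments is then computable in linear time by marginalization.

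We therefore fix a $\#\mathrm{P}$-hard problem that splits into two individually easy pieces under incompatible vtrees. A convenient choice is counting perfect matchings in a bipartite graph, equivalently the 0/1 permanent, or a variant such as counting satisfying assignments of $\varphi_1\wedge\varphi_2$. Here each $\varphi_i$ has a polynomial-size structured-deterministic circuit, i.e., an OBDD or SDD, but under different variable orders. Concretely, for an $n\times n$ bipartite graph $G$ on variables $x_{ij}$ for the edges of $G$, let $\varphi_{\text{row}}$ say ``each row has exactly one selected edge.'' This has a polynomial OBDD under row-major order $T_1$: it is a linear automaton counting per row. Let $\varphi_{\text{col}}$ say ``each column has exactly one selected edge,'' which has a polynomial OBDD under column-major order $T_2$. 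OBDDs are structured-decomposable and deterministic PCs under right-linear vtrees once we use indicator leaves and unit weights. Then $\#(\varphi_{\text{row}}\wedge\varphi_{\text{col}})$ equals the number of perfect matchings of $G$, which is $\#\mathrm{P}$-hard by Valiant's theorem.

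The algorithm is as follows. Given $G$, build $P_1$ for $\varphi_{\text{row}}$ respecting $T_1$ and $P_2$ for $\varphi_{\text{col}}$ respecting $T_2$, both in polynomial time. Run $\mathcal{R}(P_2,T_1)$ to obtain an equivalent polynomial-size deterministic PC $P_2'$ respecting $T_1$. Multiply $P_1$ and $P_2'$ using the compatible-product construction, then evaluate the count in one upward pass. The whole procedure runs in polynomial time. Its output is the permanent of the biadjacency matrix, so it would place a $\#\mathrm{P}$-hard function in $\mathrm{FP}$, contradicting $\mathrm{FP}\neq\#\mathrm{P}$.

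The step needing the most care is the compatible-product lemma. We must check that determinism and structured decomposability are preserved and that the blow-up is only quadratic. This is standard for SDDs and OBDDs, and we would cite or briefly reprove it by induction on the vtree, noting that smoothness can be enforced in polynomial time. We must also make sure the OBDD encodings of $\varphi_{\text{row}}$ and $\varphi_{\text{col}}$ are polynomial. Each scans its group of $n$ variables with a 3-state counter (zero, one, or more than one selected), giving width $O(1)$ and size $O(n^2)$. Any $\#\mathrm{P}$-hard count of $\varphi_1\wedge\varphi_2$ with $\varphi_1$ and $\varphi_2$ OBDD-representable under different orders suffices, and the matching instance gives one explicitly.
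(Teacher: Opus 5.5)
Your proof is correct and follows the paper's argument: assume $\mathcal{R}$ exists, restructure one circuit onto the other's vtree, and use the polynomial-time compatible-product construction to contradict a $\#\mathrm{P}$-hardness result. The paper cites the hardness of multiplying incompatible structured-decomposable deterministic circuits (Vergari et al.) as a black box. You instead build an explicit hard instance (row/column OBDDs for bipartite perfect matchings) and finish via model counting and Valiant's theorem, which makes the argument self-contained.
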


\begin{proof}
Suppose, for contradiction, that such an algorithm $\mathcal{R}$ exists.

Let $P$ and $Q$ be two polynomial-size structured-decomposable and deterministic PCs over
the same variable set $X$, possibly respecting different vtrees.
Let $T_Q$ be the vtree respected by $Q$.
Apply $\mathcal{R}$ to restructure $P$ into an equivalent structured-decomposable and
deterministic PC
\[
P' := \mathcal{R}(P,T_Q)
\]
that respects the same vtree $T_Q$.

Therefore, by Theorem B.2~\citep{vergari2021compositional}, their product
$P' \cdot Q$ can be computed in time polynomial in $|P'|$ and $|Q|$.
Since $P'$ is equivalent to $P$, this computes the product $P \cdot Q$ in polynomial time.

However, \cite{vergari2021compositional} show that computing the product of two
structured-decomposable and deterministic circuits as a decomposable circuit is
$\#\mathrm{P}$-hard.
Thus we would obtain a polynomial-time algorithm for a $\#\mathrm{P}$-hard problem, implying
$\mathrm{FP}=\#\mathrm{P}$, a contradiction.

Hence no such universal polynomial-time restructuring algorithm exists.
\end{proof}
\begin{theorem}[Limitation of structured-decomposable PCs]
\label{thm:selector-separation}
Let $\Y$ be a set of variables, and let $s$ be an additional binary variable.
Let
\[
A,B : \mathcal{X}_{\Y} \to \mathbb{R}_{\ge 0}
\]
be two functions such that:
\begin{enumerate}
    \item $A$ admits a polynomial-size structured-decomposable PC;
    \item $B$ admits a polynomial-size structured-decomposable PC;
    \item there is no vtree $T$ on $\Y$ such that both $A$ and $B$ admit polynomial-size structured-decomposable PCs respecting $T$.
\end{enumerate}
Define
\[
F(\y,s)
=
\mathbf{1}[s=1]\,A(\y)
+
\mathbf{1}[s=0]\,B(\y).
\]
Then $F$ admits a polynomial-size decomposable PC, but does not admit a polynomial-size structured-decomposable PC.
\end{theorem}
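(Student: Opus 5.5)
The proof has two parts: an upper bound, which is a direct construction, and a lower bound, which reduces to hypothesis 3 by conditioning on $s$.

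\emph{Upper bound.} Let $P_A$ be a polynomial-size structured-decomposable PC for $A$ and $P_B$ one for $B$, possibly respecting different vtrees $T_A$ and $T_B$. Introduce two indicator input nodes $\mathbf{1}[s=1]$ and $\mathbf{1}[s=0]$. Form the product nodes $\mathbf{1}[s=1]\cdot P_A$ and $\mathbf{1}[s=0]\cdot P_B$. Each is decomposable, since the scopes $\{s\}$ and $\Y$ are disjoint. Joining them under a sum node with unit weights gives a decomposable circuit computing $F$, of size $|P_A|+|P_B|+O(1)$. If normalized weights are required, we can rescale by the partition functions of $A$ and $B$, which are computable in polynomial time. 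The resulting circuit is generally not structured-decomposable, because its two branches split $\Y$ according to different vtrees.

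\emph{Lower bound.} Suppose for contradiction that $F$ has a polynomial-size structured-decomposable PC $Q$ respecting some vtree $\mathcal{T}$ over $\Y\cup\{s\}$. The key step is conditioning on $s$. Replace the input nodes over $s$ by the constants obtained from evaluating them at $s=1$, respectively $s=0$. This yields circuits $Q_1$ and $Q_0$ over $\Y$ that compute $F(\y,1)=A(\y)$ and $F(\y,0)=B(\y)$, and both have size at most $|Q|$. We then need to show that $Q_1$ and $Q_0$ are structured-decomposable with respect to a \emph{common} vtree $\mathcal{T}'$ on $\Y$. Take $\mathcal{T}'$ to be $\mathcal{T}$ with the leaf $s$ deleted and its parent contracted: the sibling subtree of $s$ replaces the parent. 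Then check that every product node of $Q$, restricted to scopes intersected with $\Y$, either still splits according to the corresponding node of $\mathcal{T}'$, or has one child with constant (empty-scope) value. In the latter case the product can be absorbed into a scalar multiple of the other child.

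Some cleanup then makes the circuits conform to $\mathcal{T}'$. Constant-scope subcircuits are folded into weights. Product nodes whose scope covered the contracted vertex become unary products and are collapsed. Negative constants cannot arise, because evaluating indicators gives values in $\{0,1\}$ and input distributions are nonnegative. This gives polynomial-size structured-decomposable PCs for both $A$ and $B$ respecting the same vtree $\mathcal{T}'$, contradicting hypothesis 3.

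The main obstacle is making this conditioning and contraction step rigorous under the paper's definition of structured decomposability. That definition requires each product node's children to match the vtree split exactly, so nodes whose scope is a proper subset of a vtree node's scope, or nodes created by the contraction, need care. I would first normalize $Q$ into a canonical form, padding scopes with marginalized (constant-one) inputs, so that every product node corresponds exactly to a vtree node. After that the contraction argument applies node by node with at most a constant-factor blowup. Proposition~\ref{prop:no-universal-restructuring} is not needed for the argument. At most it serves as motivation that hypothesis 3 is satisfiable, for instance by choosing $A$ and $B$ whose product is hard to compute.
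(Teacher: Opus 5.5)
Your proposal is correct and follows the paper's proof: the upper bound uses the same two-branch selector circuit $\mathbf{1}[s=1]\cdot A + \mathbf{1}[s=0]\cdot B$. The lower bound uses the same conditioning on $s$, which yields size-bounded circuits for $A$ and $B$ respecting the common restricted vtree $\mathcal{T}|_{\Y}$ and so contradicts hypothesis 3. The paper simply asserts that conditioning preserves structured decomposability with respect to the restricted vtree; your explicit leaf-deletion and parent-contraction, and your folding of constant children into weights, fill in that step rather than taking a different route.
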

\begin{proof}
We first show that $F$ admits a polynomial-size decomposable PC.
Construct a circuit with two product nodes,
\[
G_1 = \mathbf{1}[s=1]\cdot A(\y),
\qquad
G_2 = \mathbf{1}[s=0]\cdot B(\y),
\]
and a sum node at the root computing
\[
F(\y,s)=G_1+G_2.
\]
Each product node is decomposable, since one child depends only on $s$ and the other depends only on $\Y$, so their scopes are disjoint. Hence the whole circuit is decomposable. Its size is polynomial.

Now suppose, for contradiction, that $F$ admits a polynomial-size structured-decomposable PC $C$ respecting some vtree $\mathcal{T}$ over $\Y \cup \{s\}$.

Condition on the selector variable. Setting $s=1$ yields
\[
F(\y,1)=A(\y),
\]
and setting $s=0$ yields
\[
F(\y,0)=B(\y).
\]
Therefore, the conditioned circuits
\[
C\mid_{s=1}
\qquad\text{and}\qquad
C\mid_{s=0}
\]
compute $A$ and $B$, respectively.

Moreover, conditioning preserves structured decomposability and does not increase circuit size. After restricting $s$, both conditioned circuits respect the same restricted vtree $\mathcal{T}|_{\Y}$ on $\Y$.

Hence both $A$ and $B$ admit polynomial-size structured-decomposable PCs respecting one common vtree $\mathcal{T}|_{\Y}$, contradicting assumption (3).

Therefore $F$ does not admit a polynomial-size structured-decomposable PC.
\end{proof}

\section{Experimental Details}
\label{appx:exp-details}

\subsection{Model Architectures}

\paragraph{Balanced-tree PC.}
All PC models are structured-decomposable with respect to a full balanced binary 
tree vtree over the $N$ sequence positions, where $N$ is required to be a power 
of two. The tree is constructed by repeatedly halving the sequence: at each 
level, adjacent pairs of nodes are merged by alternating sum and product layers, 
until a single root node remains. Concretely, the architecture consists of 
$\log_2 N$ stages, each comprising a \texttt{SumLayer} followed by a 
\texttt{ProductLayer}, and a final \texttt{SumLayer} at the root with a learned 
mixture weight vector. Each node maintains $C$ channels throughout. The leaf 
nodes are parameterized as categorical distributions over the vocabulary, 
implemented as a \texttt{CategoricalLeaf} module that outputs per-position, 
per-channel log-probabilities.

\paragraph{Tree layouts.}
The vtree structure is controlled by a \emph{leaf permutation}, which maps tree 
positions to original sequence indices. We use two layouts:

\textbf{Standard layout.} The identity permutation
$$
    \pi_{\mathrm{std}}(i) = i, \quad i = 0, \dots, N-1,
$$
so that tree position $i$ corresponds to original token $i$. Under this layout, 
adjacent tokens are placed as siblings at the lowest level of the tree, making 
the model naturally aligned with local dependency patterns.

\textbf{Shifted-induction layout.} Designed to align the tree with induction-
style prefix-copy dependencies, where token $t$ in the second half copies token 
$t - N/2$ in the first half. Let $H = N/2$. We construct sibling pairs of the 
form $(\text{pair\_id},\ H + \text{pair\_id})$, placing each prefix token 
together with its corresponding copy in a shared low-level subtree. The pairs 
are then ordered according to a bit-reversal permutation to balance the tree 
depth across all positions. Formally, let $\{r_0, \dots, r_{H-1}\}$ be the 
bit-reversal of $\{0, \dots, H-1\}$ using $\lfloor \log_2 N \rfloor - 1$ bits. 
The leaf permutation is defined as
\begin{align*}
    \pi_{\mathrm{ind}}(2k) = r_k,\qquad
    \pi_{\mathrm{ind}}(2k+1) = H + r_k,
\end{align*}
where $k = 0, \dots, H-1$,
so that tree positions $(2k, 2k+1)$ are siblings corresponding to the original 
token pair $(r_k,\ H + r_k)$.

In both cases, the future mask used for autoregressive marginalization is derived 
directly from the permutation: tree position $p$ is treated as a future position 
at prediction step $t$ if and only if $\pi^{-1}(p) \geq t$.

\paragraph{Transformer}
The Transformer baseline is a decoder-only causal Transformer. It uses standard causal self-attention without any tree-structured attention mask. The model has 5 layers, 4 attention heads, a hidden size of 256, dropout 0.1, context length 32, and approximately 4.09M trainable parameters. It is trained with an autoregressive next-token cross-entropy objective.



\subsection{Experimental Details for Synthetic Data}
\label{app:synthetic_data}

\paragraph{Local-copy dataset.}
Each sequence of length $N$ is constructed by sampling $N/2$ tokens independently 
and uniformly from the vocabulary $\{1, \dots, V\}$, then repeating each token 
in the immediately following position:
$
    x = (a_1, a_1, a_2, a_2, \dots, a_{N/2}, a_{N/2}),
    \quad a_i \sim \mathrm{Uniform}(\{1,\dots,V\}).
$
The autoregressive target is the left-shifted sequence $y_t = x_{t+1}$, and the 
loss is computed over all positions. This dataset creates a purely local 
dependency structure: predicting $x_{t+1}$ requires only $x_t$.

\paragraph{Induction-style prefix-copy dataset.}
Each sequence of length $N$ is constructed by sampling a prefix of length $N/2$ independently and uniformly from $\{1, \dots, V\}$, then appending an exact copy:
$
    x = (a_1, \dots, a_{N/2}, a_1, \dots, a_{N/2}),
    \quad a_i \sim \mathrm{Uniform}(\{1,\dots,V\}).
$
The autoregressive target is the left-shifted sequence. To isolate the nonlocal induction behavior, the loss is computed only over the second half of the sequence: the targets for positions $1, \dots, N/2$ are masked out (set to the ignore index $-100$). Predicting $x_t$ for $t > N/2$ therefore requires associating position $t$ with its corresponding occurrence $t - N/2$ in the prefix, creating a nonlocal dependency of distance $N/2$.

\paragraph{Mixed-topology dataset.}
The mixed training distribution is constructed by combining the two datasets above with equal probability: each sample is drawn from the local-copy generator with probability $0.5$ and from the induction-style prefix-copy generator with probability $0.5$. For the local-copy samples within the mixed distribution, the loss is computed over all positions; for the induction-style samples, the loss is computed only over the second half, following the same masking convention as above. At evaluation time, we report performance separately on each constituent validation set to disentangle topology-specific behavior from average performance.

\subsection{Experimental Details for PTB and UD-ATIS}
\label{app:exp-details-real}

\paragraph{PTB.}
We use the Penn Treebank~\citep{marcus1993building} for autoregressive
language modeling with packed token segments of length 32 and a vocabulary of 10{,}000 tokens. We train 9 structure specialists and compare frozen specialist-initialized, unfrozen specialist-initialized, and from-scratch late mixtures. All specialists and mixtures use 8 channels, batch size 64, learning rate $10^{-4}$, and weight decay $10^{-4}$; specialists are trained for 30 epochs.

\paragraph{UD-ATIS.}
We use the UD English ATIS treebank for sentence-level autoregressive language modeling, filtering sentences to lengths between 4 and 32 tokens with a vocabulary of 2{,}000 tokens. We train 6 structure specialists and compare frozen specialist-initialized, unfrozen specialist-initialized, and from-scratch late mixtures. All specialists and mixtures use 4 channels, batch size 64, learning rate $10^{-4}$, and weight decay $10^{-4}$; models are trained for 3 epochs.

\paragraph{Structures.}
The structure denotes a fixed vtree layout over token positions. We consider several families of layouts: balanced trees, left- and right-branching chains, local block trees, strided or dilated chains, boundary-to-center or center-to-boundary chains, and half-pairing layouts. Local block layouts first compose nearby positions within contiguous blocks, while strided and dilated layouts group non-adjacent positions. The half-pairing layouts pair position $i$ with position $N/2+i$, which is intended to favor repeated or induction-like dependencies across the two halves of a sequence.

Each specialist uses a fixed vtree over token positions. PTB uses nine layouts covering left/right chains, local block trees, strided parity-based chains, outside-in ordering, and half-sequence pairing. UD-ATIS uses six specialist slots drawn from dilated chain, inside-out chain, right chain, and shifted induction. These structures provide different inductive biases over local, directional, long-range, and repeated-position dependencies.

\subsection{Hardware Configuration}\label{subsec:hardware}
All experiments are reproducible with single-process, single-GPU runs. Our hardware configuration consists of one NVIDIA RTX PRO 6000 Blackwell Server Edition GPU (98 GB memory) and an AMD EPYC 9555 CPU (256 logical cores, 1.0 TiB RAM).
\section{Prob-Transformer Parameterization}
\label{app:prob-transformer}

To isolate the effect of the output space independently from the context encoder, we construct a Prob-Transformer variant that replaces the standard logit-space output layer with a probability-space parameterization.

\paragraph{Model formulation.}
Given a prefix $\mathbf{x}_{\le t}$, the Transformer encoder produces a hidden representation
\[
\mathbf{h}_t \in \mathbb{R}^{d}.
\]
Instead of mapping $\mathbf{h}_t$ to logits, we directly parameterize the next-token distribution as a convex combination of basis distributions:
\[
p(x_{t+1} \mid \mathbf{x}_{\le t}) = \sum_{i=1}^{r} c_i(\mathbf{x}_{\le t}) \, B_i,
\]
where:
\begin{itemize}
    \item $c(\mathbf{x}_{\le t}) \in \Delta^r$ is a vector of mixture coefficients,
    \item $\{B_i\}_{i=1}^r$, with $B_i \in \Delta^{|V|}$, are learnable basis distributions over the vocabulary,
    \item $r$ is the mixture rank.
\end{itemize}

\paragraph{Mixture coefficient parameterization.}
We derive the mixture coefficients directly from the hidden representation via a positive normalization:
\[
c_i(\mathbf{x}_{\le t}) = \frac{\mathrm{softplus}(h_{t,i})}{\sum_{j=1}^{r} \mathrm{softplus}(h_{t,j})}.
\]
In our implementation, we set $r = d$, tying the mixture rank to the embedding dimension. This mirrors the Logit-HMM construction, where the latent state dimension serves as the feature space.

This formulation enforces that the predictive distribution lies in the convex hull of $\{B_i\}_{i=1}^r$, thereby introducing a probability-space output bottleneck analogous to that of PCs. Importantly, the context encoder remains unchanged, allowing us to isolate the effect of the output parameterization while keeping the representational capacity of the encoder fixed.

\section{Model Size}
\label{app:model-size}

\paragraph{Parameter scaling.}
Let $V$ denote the vocabulary size and $d$ the latent or embedding dimension. In our PTB and WikiText experiments, $V = 50{,}257$, so the dominant contribution to the parameter count comes from vocabulary-sized projections.

The parameter counts for the four model families scale approximately as:
\[
\text{HMM}(d) = d^2 + Vd + d \;\approx\; Vd,
\]
\[
\text{Logit-HMM}(d) = d^2 + 2Vd + V + d \;\approx\; 2Vd,
\]
\[
\text{Transformer}(d) \approx 2Vd + 24d^2 + O(Ld),
\]
\[
\text{Prob-Transformer}(d) \approx 2Vd + 24d^2 + O(Ld).
\]

\paragraph{Discussion.}
For the Transformer and Prob-Transformer, we use $n_{\text{layer}}=2$. The $24d^2$ term arises from the two Transformer blocks, while the $2Vd$ term comes from the input embedding and the output vocabulary projection. 

The Prob-Transformer introduces an additional $d \times V$ probabilistic basis, making its parameter count comparable to that of a Transformer without weight tying. Similarly, the Logit-HMM includes both an emission matrix and a logit projection, leading to two vocabulary-sized components. In contrast, the vanilla HMM contains only a single $d \times V$ emission matrix, making it roughly a factor of two smaller.

\paragraph{Concrete sizes.}
To make the scaling concrete, we instantiate the formulas using the PTB and WikiText configuration, where $|\mathcal{V}|=50{,}257$ and $d \in \{8,16,32,64,128\}$. The resulting parameter counts are:

\[
\begin{array}{c|ccccc}
d & 8 & 16 & 32 & 64 & 128 \\ \hline
\text{HMM} & 0.40\text{M} & 0.80\text{M} & 1.61\text{M} & 3.22\text{M} & 6.45\text{M} \\
\text{Logit-HMM} & 0.85\text{M} & 1.66\text{M} & 3.27\text{M} & 6.49\text{M} & 12.93\text{M} \\
\text{Transformer} & 0.81\text{M} & 1.62\text{M} & 3.24\text{M} & 6.54\text{M} & 13.27\text{M} \\
\text{Prob-Transformer} & 0.81\text{M} & 1.62\text{M} & 3.24\text{M} & 6.54\text{M} & 13.27\text{M}
\end{array}
\]

\paragraph{Takeaway.}
Except for the vanilla HMM, the other three models have broadly comparable parameter counts at the same $d$. This is because they all contain two vocabulary-sized $d \times V$ components, whereas the HMM contains only one. As a result, comparisons among Logit-HMM, Transformer, and Prob-Transformer isolate differences in output parameterization rather than model capacity.

\section{Training Details}
\label{app:training}

\paragraph{Overview.}
We evaluate our hypotheses across three datasets: \textbf{Simple Word}, \textbf{Penn Treebank (PTB)}, and \textbf{WikiText-103}. We compare four model families: \textbf{HMM}, \textbf{Logit-HMM}, \textbf{Transformer}, and \textbf{Prob-Transformer}. For Simple Word, we sweep the latent or embedding dimension over $d \in \{4,6,8,10\}$; for PTB and WikiText, we sweep $d \in \{8,16,32,64,128\}$. All Transformer-based models use a 2-layer architecture. We use 2 attention heads for Simple Word and 4 attention heads for PTB and WikiText.

To maintain consistency across experiments, we adopt standardized optimization parameters for each model family. We use the AdamW optimizer with a constant learning rate and the standard autoregressive training objective. The following table summarizes the primary optimization hyperparameters.

\begin{table}[h]
\centering
\caption{Common optimization hyperparameters across model families.}
\begin{tabular}{lcccc}
\toprule
\textbf{Hyperparameter} & \textbf{HMM} & \textbf{Logit-HMM} & \textbf{Transformer} & \textbf{Prob-Transformer} \\
\midrule
Learning Rate & $0.01$ & $5\times 10^{-4}$ & $1\times 10^{-4}$ & $5\times 10^{-4}$ \\
Weight Decay & $0.0$ & $5\times 10^{-6}$ & $0.1$ & $0.1$ \\
Dropout & $0.0$ & $0.0$ & $0.1$ & $0.1$ \\
\bottomrule
\end{tabular}
\end{table}

\paragraph{Dataset-Specific Configurations.}
Across all datasets, we use early stopping with a patience of 5 epochs and report results based on the best validation loss. We adjust the maximum number of epochs, batch size, sequence length, and attention heads according to dataset scale.

\paragraph{Simple Word.}
We use character-level tokenization for a last-token prediction task. The vocabulary size is $|\mathcal{V}|=54$, and we sweep $d \in \{4,6,8,10\}$. Transformer-based models use 2 layers and 2 attention heads. All models are trained with a maximum of 100 epochs, early stopping patience 5, and batch size 64.

\paragraph{PTB.}
We use GPT-2 tokenization and optimize the full-sequence autoregressive objective with sequence length $L=64$ on 17,736 training sequences. The vocabulary size is $|\mathcal{V}|=50{,}257$, and we sweep $d \in \{8,16,32,64,128\}$. Transformer-based models use 2 layers and 4 attention heads. All models are trained with a maximum of 200 epochs, early stopping patience 5, and batch size 32.

\paragraph{WikiText.}
We use GPT-2 tokenization with sequence length $L=128$ on a subset of 200K training sequences. The vocabulary size is $|\mathcal{V}|=50{,}257$, and we sweep $d \in \{8,16,32,64,128\}$. Transformer-based models use 2 layers and 4 attention heads. All models are trained with a maximum of 150 epochs, early stopping patience 5, and an effective batch size of 128.

\end{document}